\definecolor{cvprblue}{rgb}{0.21,0.49,0.74}
\DeclareMathOperator*{\argmax}{argmax}
\title{Project-Probe-Aggregate: Efficient Fine-Tuning for Group Robustness}
\author{
 {Beier Zhu}$^{1,2}$ \quad 
 {Jiequan Cui}$^1$ \quad
 {Hanwang Zhang}$^1$ \quad
 {Chi Zhang}$^{2\dagger}$\\
$^1$Nanyang Technological University, $^2$Westlake University \\
\tt \small 
\{beier.zhu, jiequan.cui, hanwangzhang\}@ntu.edu.sg, chizhang@westlake.edu.cn}
\begin{document}

\newcommand{\ours}{\text{PPA}}
\newcommand{\x}{\mathbf{x}}
\newcommand{\z}{\mathbf{z}}
\newcommand{\vc}{\mathbf{c}} 
\newcommand{\s}{\mathbf{s}} 

\newtheorem{definition}{Definition}
\newtheorem{theorem}{Theorem}
\newtheorem{assumption}{Assumption}
\newtheorem{lemma}{Lemma}
\newtheorem{proposition}{Proposition}
\newtheorem{corollary}{Corollary}

\newcommand{\tableCellHeight}{1}
\newcommand{\tabstyle}[1]{
  \setlength{\tabcolsep}{#1}
  \renewcommand{\arraystretch}{\tableCellHeight}
  \centering
  \small
}

\definecolor{tabhighlight}{HTML}{e5e5e5}

\newtheoremstyle{restatedlemma}
  {\topsep}       
  {\topsep}       
  {\itshape}      
  {}              
  {\bfseries}     
  {.}             
  {.5em}          
  {\thmname{#1} \thmnumber{#2} (\thmnote{#3})} 

\newtheoremstyle{restatedproposition}
  {\topsep}       
  {\topsep}       
  {\itshape}      
  {}              
  {\bfseries}     
  {.}             
  {.5em}          
  {\thmname{#1} \thmnumber{#2} (\thmnote{#3})} 

\theoremstyle{restatedlemma}
\newtheorem*{restatedlemma}{Restated Lemma}

\theoremstyle{restatedproposition}
\newtheorem*{restatedproposition}{Restated Proposition}

\maketitle
\renewcommand{\thefootnote}{}
\footnotetext{This work was done during Beier Zhu's visit  at AGI Lab, Westlake University. $^{\dagger}$ denotes corresponding author.}
\renewcommand{\thefootnote}{\arabic{footnote}}

\begin{abstract}
While image-text foundation models have succeeded across diverse downstream tasks, they still face challenges in the presence of spurious correlations between the input and label. 
To address this issue, we propose a simple three-step approach--Project-Probe-Aggregate ($\ours$)--that enables parameter-efficient fine-tuning for foundation models without relying on group annotations. 
Building upon the failure-based debiasing scheme, our method, $\ours$, improves its two key components: minority samples identification and the robust training algorithm.
Specifically, we first train biased classifiers by projecting image features onto the nullspace of class proxies from text encoders. Next, we infer group labels using the biased classifier and probe group targets with prior correction. Finally, we aggregate group weights of each class to produce the debiased classifier. Our theoretical analysis shows that our PPA enhances minority group identification and is Bayes optimal for minimizing the balanced group error, mitigating spurious correlations. Extensive experimental results confirm the effectiveness of our $\ours$: it outperforms the state-of-the-art by an average worst-group accuracy while requiring less than $0.01\%$ tunable parameters without training group labels.  
\end{abstract}    
\section{Introduction}
\label{sec:intro}
Image-text foundation models~\cite{radford2021learning,jia2021scaling,wang2022simvlm}—large models pre-trained on web-scale data—are becoming increasingly prevalent in real-world deployments. However, adapting these models to downstream tasks, whether through zero-shot transfer or fine-tuning, remains challenging due to fundamental issues of group robustness: they achieve low average test error but incur high risk on certain groups of examples~\cite{chuang2023debiasing,zhang2022contrastive,you2024calibrating,yang2023mitigating}. This issue becomes particularly pronounced in the presence of spurious correlations~\cite{geirhos2020shortcut}, where a classifier relies on non-causal relationships that happen to align with class labels in training but break down during deployment. For example, consider the case where “waterbirds” predominantly appear in images with a “water” background. When trained on such a dataset, neural networks often rely on the background to recognize the objects.

As a result, several fine-tuning methods have emerged to improve group robustness in foundation models~\cite{chuang2023debiasing,zhang2022contrastive,you2024calibrating,yang2023mitigating,phancontrollable}. Unlike conventional robust training frameworks, which typically re-train the entire model~\cite{liu2021just,liuavoiding,nam2020learning,namspread} (which is computationally expensive) and/or require group labels~\cite{sagawa2019distributionally,arjovsky2019invariant} (which demands additional annotation costs), these fine-tuning methods prioritize parameter and annotation efficiency. They address spurious correlations by optimizing only a small subset of parameters without relying on training group labels. For instance, Zhang and Ré~\cite{zhang2022contrastive} fine-tune a lightweight adapter and You \etal \cite{you2024calibrating} propose to optimize projection layers to improve distributional robustness. In this paper, we build on these ideas by focusing on efficient fine-tuning to enhance group robustness, \ie, training linear probes.

\begin{table}[t]
\tabstyle{5.5pt}
\centering
\begin{tabular}{ccc}
\toprule
\begin{tabular}[c]{@{}c@{}}Method \end{tabular} &
  \begin{tabular}[c]{@{}c@{}}Worst-group\\Recall (\%)\end{tabular} &
  \begin{tabular}[c]{@{}c@{}}Worst-group\\Precision (\%)\end{tabular}  \\
  \midrule
CA~\cite{zhang2022contrastive}  & 46.4 & 15.6  \\
JTT~\cite{liu2021just}     & 78.6  & 24.1 \\
$\ours$ (ours)  & 80.4 & 44.3 \\
\bottomrule
\end{tabular}
\caption{
        The precision and recall of the worst-group examples (\ie, the group with lowest validation accuracy) identified by biased models on the Waterbird training set~\cite{sagawa2019distributionally} using CLIP ResNet-50. Our $\ours$ is more susceptible to suprious features, achieving higher worst-group recall and precision compared to previous methods.      
}
\label{tab:recall-precision}
\end{table} 
At a high level, in the absence of group annotations, a common strategy~\cite{li2019repair,nam2020learning,liuavoiding,liu2021just,zhang2022contrastive,zhang2022correct,dagaev2023too} involves first identifying minority groups by training a \textit{biased model} that easily overfits to spurious features. Subsequently, a \textit{debiased model} is trained using the inferred group labels. To improve the identification of minority groups, prior work proposes various strategies: \cite{nam2020learning,liuavoiding} employ generalized cross-entropy loss to amplify the model bias, JTT~\cite{liu2021just} simply trains a model with empirical risk minimization (ERM) and tunes hyperparameters to exacerbate bias, CA~\cite{zhang2022contrastive} leverages the correctness of zero-shot predictions and \cite{dagaev2023too} uses low-capacity networks to capture spurious features. In this paper, we leverage the pre-trained knowledge from the image-text foundation models to effectively induce bias, further enhancing the identification of minority groups. Specifically, we \textbf{project} out class proxies, provided by the text encoders, from the input image features and use only the remaining information to train the biased classifiers. Intuitively, removing class information forces the model to rely more heavily on spurious features for predictions. We provide theoretical analysis to support this intuition in~\cref{prop:weight}. Empirically, in~\cref{tab:recall-precision}, we evaluate the quality of the inferred groups in the training datasets obtained by our method, and compare it to prior approaches~\cite{zhang2022contrastive,liu2021just}. Specifically, we measure precision—the fraction of examples within the identified minor groups that belong to the worst-performing group, and recall—the fraction of examples from the worst-performing group that are captured by the identified minor groups. As expected, our method achieves higher precision and recall, demonstrating superior group identification.

Given the inferred groups, typical debiased learning algorithms up-weight/up-sample minority group samples to enhance performance on these challenging instances~\cite{liu2021just,kirichenko2022last,idrissi2022simple,nam2020learning,dagaev2023too}. However, determining the optimal weights requires careful hyper-parameter tuning and lacks theoretical guidance. Since the goal of debiased training is to minimize the Balanced Group Error (BGE)~\cite{liuavoiding,tsirigotis2024group}, we propose a methodology designed to achieve this objective directly. Rather than predicting class labels, our method is trained to \textbf{probe} for group targets by compensating group priors into the softmax cross-entropy during training. Afterward, we \textbf{aggregate} group weights of each class to produce the final debiased classifier.  
We dub our overall debiasing framework Project-Probe-Aggregate ({PPA}), and demonstrate that it is \textit{Bayes} optimal for minimizing BGE in~\cref{propo:gla}. 

In summary, our contributions are:
\begin{itemize}
    \item We project out the class proxies offered by the text encoder and train the biased model on the residual image features, with theoretical analysis showing this approach increases reliance on spurious features, improving minority group identification.
    \item We propose a novel debiasing method that is consistent for minimizing the balanced group error. The debiased classifier predicts group labels compensated by group priors and uses weight-space aggregation after fine-tuning to avoid group inference overhead.
    \item We confirm the effectiveness of our proposal across five benchmarks with spurious correlations. With less than $0.01\%$ of the foundation model parameters, our approach $\ours$ improves the worst group accuracy over the state-of-the-art without training group annotations. Beyond linear probes, we also adapt other parameter-efficient fine-tuning paradigms (e.g., prompt tuning and adapters) to show the versatility of the proposed method.
\end{itemize}
\section{Related Work}
\label{sec:related_work}

\noindent\textbf{Improving group robustness.}
There have been numerous works aimed at improving group robustness, which broadly fall into two categories based on whether group labels are available during training.
(i) \textit{Group supervised methods} focus on balancing the contribution of each group during the training phase, such as through re-sampling/re-weighting~\cite{idrissi2022simple,kirichenko2022last,izmailov2022feature,menon2020overparameterisation,jung2022learning,wang2020towards}, or robust optimization~\cite{arjovsky2019invariant,sagawa2019distributionally,taghanaki2021robust}. However, this approach requires additional annotation costs for group labels. In this paper, we focus on (ii) \textit{group unsupervised methods}, which do not assume the availability of group labels during training~\cite{asgari2022masktune,kim2021biaswap,kim2022learning,yaghoobzadeh2019increasing,lee2021learning,wu2023discover,yang2022understanding,tartaglione2021end}. 
Without group labels, a common strategy is to first train an easily biased model, and then infer groups based on its biased predictions. A robust model is subsequently trained using the inferred group labels through techniques such as importance weighting~\cite{liu2021just,nam2020learning,qiu2023simple}, robust optimization~\cite{creager2021environment,namspread}, or by learning similar representations for groups within the same class~\cite{zhang2022correct}.  In this paper, we leverage the general knowledge from image-text foundation models to train a biased classifier by excluding the class proxy information generated by the CLIP text encoder. Note that
  classifier in~\cite{wang2020towards} also predicts group labels, our approach differs in \textit{two key aspects}: 1) We further incorporate logit offsets during training, which compensates for group imbalances and leads to lower balanced error. 2) Unlike~\cite{wang2020towards}, our method \textit{does not} require access to ground-truth group labels.
  
\noindent\textbf{Efficient robust fine-tuning for foundation models.} 
The advent of image-text foundation models has led to several efforts to develop efficient group-robust fine-tuning methods~\cite{kirichenko2022last,zhang2022contrastive,chuang2023debiasing,yang2023mitigating,you2024calibrating,phan2024controllable,zhang2023diagnosing,espinosa2024efficient,kim2024discovering,zhu2024robust}. Kirichenko \etal~\cite{kirichenko2022last} demonstrate that last-layer retraining significantly improves group robustness in pretrained models. Zhang and R\'e~\cite{zhang2022contrastive} use contrastive objectives to train adapters, bringing same-class embeddings closer together. Chuang \etal~\cite{chuang2023debiasing} reduce bias in image-text models by removing biased directions in text embeddings. Other works~\cite{yang2023mitigating,you2024calibrating} focus on lightweight fine-tuning by optimizing only the projection layers of the vision branch for distributional robustness.
Our approach builds on efficient fine-tuning by training only a linear classification head for group targets and further introduces weight-space ensembling after fine-tuning to eliminate the overhead of group inference.

\section{Preliminaries}

\subsection{Setup}
Considering a classification problem with instances $\x \in \mathcal{X} \subseteq \mathbb{R}^d$ and labels $y \in \mathcal{Y}=[K]$. 
Each data point $(\x,y)$ has an input attribute $a(\x) \in \mathcal{A}$ that is spuriously correlated with the target $y$. 
We form $|\mathcal{G}|=|\mathcal{A}|\times |\mathcal{Y}|$ groups, where each group label $g \in \mathcal{G}$ is the combination of the attribute $a$ and the class label $y$, \ie, $g:=(a, y)$.
Given a training dataset $\mathcal{D}=\{\x_i,y_i\}_{i=1}^N$ drawn from some unknown distribution $\mathbb{P}$, and let $\mathbb{P}_g$ be the distribution conditioned on $g$ for any $g \in \mathcal{G}$.
We assume that spurious attribute annotations are \textit{not} available for the samples in the training set.
Our goal is to train a model $f_{\theta}: \mathcal{X} \rightarrow \mathbb{R}^{K}$ that outputs prediction score, such that it achieves low expected error:
\begin{equation}
    \mathcal{R}=\mathbb{E}_{\x,y\sim \mathbb{P}}[y\neq \argmax_{y'\in \mathcal{Y}}f_{\theta}(\x)_{y'}]
\end{equation}
and, more importantly, exhibits \textit{group robust}, \ie, achieving low worst-group error:
\begin{equation}
   \mathcal{R}_\mathsf{wg} = \max_{g\in \mathcal{G}}\mathbb{E}_{\x,y\sim \mathbb{P}_g} [y\neq \argmax_{y'\in \mathcal{Y}}f_{\theta}(\x)_{y'}]
\end{equation}

For example, in the Waterbirds~\cite{sagawa2019distributionally} dataset, the task is to classify birds $y \in \{\text{waterbird}, \text{landbird}\}$, with the background serving as a spurious attribute ($a \in \{\text{water}, \text{land}\}$). A notable pattern is that about $95\%$ of waterbird images have a water background, causing models to rely on the ``water'' background to predict ``waterbird''. This reliance leads to poor performance on minority groups, such as landbirds with water backgrounds $g = (\text{water}, \text{landbird})$.
Unlike domain generalization or out-of-distribution evaluation, all groups are present in the training, validation, and test splits. However, the training groups are imbalanced, which can result in poor group robustness on the test set.

\subsection{Common Strategy for Improving Unsupervised Group Robustness}\label{sec:commonstrategy}
At a high level, typical methods without access to group labels first train a \textit{biased model} and then use its biased predictions to infer the groups. Subsequently, a \textit{robust model} is trained using the inferred group labels.\footnote{It is also known as the failure-based debiasing scheme.~\cite{nam2020learning}}

Taking JTT~\cite{liu2021just} as an example, it identifies the misclassified samples from ERM model $f_\mathsf{erm}$ as the error set~$\mathcal{E}$:
\begin{equation} 
\mathcal{E}=\{(\x_i,y_i)\ \text{s.t.}\ f_\mathsf{erm}(\x_i)\neq y_i\}, 
\end{equation} 
 then trains a robust model by upweighting the samples in $\mathcal{E}$: 
\begin{equation} 
\mathcal{L}_\mathsf{JTT}(\theta,\mathcal{E})=\lambda\sum_{(\x,y)\in \mathcal{E}}\ell(\x,y;\theta)+\sum_{(\x,y)\notin \mathcal{E}}\ell(\x,y;\theta), 
\end{equation} where $\lambda\in \mathbb{R}_+$ is a hyperparameter.
The core idea behind JTT is that ERM models tend to fit groups with easily learnable spurious correlations. The samples in $\mathcal{E}$ are primarily from minority groups where these correlations fail. Upweighting such challenging samples improves the model's performance on the worst-performing groups. 

\noindent\textbf{Remark.} Improving group robustness without access to group labels hinges on identifying minority groups by training a biased model that overfits on spurious features. Prior work amplifies this bias through various strategies: LfF~\cite{nam2020learning} employs generalized cross-entropy (GCE) loss to bias the first-stage model, JTT~\cite{liu2021just} tunes hyperparameters to induce bias, and \cite{dagaev2023too} uses low-capacity networks to capture spurious features.
In this paper, we enhance bias by projecting out the class proxies from the input features, using only the remaining information for classification.
In~\cref{sec:train_biased}, we prove that this operation increases susceptibility to spurious features, aiding the identification of minority groups.

Another key factor is the robust learning algorithm given the pseudo groups. Prior methods address this through importance weighting~\cite{liu2021just,nam2020learning}, robust optimization~\cite{creager2021environment,namspread}, or aligning representations within the same class~\cite{zhang2022correct}. Our approach, $\ours$, predicts pseudo group labels and applies a group prior offset to correct for imbalance. In~\cref{sec:gla}, we prove that our training algorithm effectively mitigates spurious correlations.

\section{Methods}
Tuning entire pre-trained models is both time-consuming and computationally costly. Recent work~\cite{kirichenko2022last} shows that simple last-layer retraining performs well on spurious correlation benchmarks. In our approach, the input 
$\x$ represents image features encoded by CLIP. We freeze the backbone and learn linear classifiers for efficient fine-tuning. 

We now present $\ours$, a simple three-step approach following the common strategy in~\cref{sec:commonstrategy}. First, we train a biased classifier by removing the class proxies generated by the CLIP text encoder. Next, we group samples by classification correctness and train a robust model to predict group labels using our proposed group logit adjustment. Finally, we aggregate group weights to produce the final classifier. 

\noindent\textbf{Step 1: Project out class proxies.}
We compute the matrix $Z=[\z_1,\dots,\z_K]^\top \in \mathbb{R}^{K\times d}$  whose rows are the text embeddings of the $K$ class names. Specifically, each $\z_j$ is derived from a prompt like ``a photo of a \texttt{[CLASS]}.'' with the class token \texttt{[CLASS]} replaced by the $j$-th class name.
Let $\Pi\in \mathbb{R}^{d\times d}$ be the projection operator onto the null space of $Z$, given by:
\begin{equation}\label{eq:Pi}
    \Pi = I - Z^\top (Z Z^\top)^{-1} Z 
\end{equation}
With $\Pi$, the biased model $f_\mathsf{b}: \mathcal{X} \rightarrow \mathbb{R}^K$ is formulated as a linear classifier parameterized by a matrix $W_\mathsf{b}\in \mathbb{R}^{K\times d}$:
\begin{equation}\label{eq:biased-classifier}
    f_\mathsf{b}(\x) =  W_\mathsf{b}\Pi \x 
\end{equation}
To account for potential class imbalance\footnote{Datasets like Waterbirds~\cite{sagawa2019distributionally} and CelebA~\cite{liu2015deep} are class imbalanced.}, we apply the logit-adjustment loss~\cite{menonlong,zhu2023generalized} to ensure $f_\mathsf{b}$ focuses on spurious features rather than class priors:
\begin{equation}\label{eq:la}
    \ell_\mathsf{la}(y,f_\mathsf{b}(\x))= - \ln \frac{\exp(f_\mathsf{b}(\x)+\ln \bm{\pi})_y}{\sum_{y'\in \mathcal{Y}}\exp(f_\mathsf{b}(\x)+\ln \bm{\pi})_{y'}},
\end{equation}
where $\bm{\pi}_y$ is the prior of class $y$.
In~\cref{prop:weight}, we show that our biased model is more easily influenced by spurious correlation.

\noindent\textbf{Step 2: Probe with group target.}
We follow the heuristic of JTT~\cite{liu2021just} to identify minority groups that $f_\mathsf{b}$ misclassifies:
\begin{equation}\label{eq:pseudo}
    \hat{a}(\x)= \mathds{1}[y\neq \argmax_{y'\in \mathcal{Y}}f_\mathsf{b}(\x)_{y'}],
\end{equation}
where $\mathds{1}[\cdot]$ is the indicator function.
Then, each training sample is augmented as $(\x,y,\hat{a})$, with the pseudo group defined as the combination of $y$ and $\hat{a}$:
$\hat{g}:=(y,\hat{a})\in \hat{\mathcal{G}}$. 

Our approach $\ours$ uses a group scorer, implemented via linear probing, to predict pseudo group labels. Specifically, the group score $h_\mathsf{d}(\x)=W_\mathsf{d} \x: \mathcal{X} \rightarrow \mathbb{R}^{|\hat{\mathcal{G}}|}$ is parameterized by  $W_\mathsf{d} \in \mathbb{R}^{|\hat{\mathcal{G}}| \times d}$. 
Let $\hat{\bm{\beta}}$ denote the group priors, we propose the group logit adjustment loss to achieve balanced group performance.: 
\begin{equation}\label{eq:gla}
    \ell_\mathsf{gla}(\hat{g},h_\mathsf{d}(\x))= - \ln \frac{\exp(h_\mathsf{d}(\x)+\tau \cdot \ln \hat{\bm{\beta}})_{\hat{g}}}{\sum_{g'\in \mathcal{\hat{\mathcal{G}}}}\exp(h_\mathsf{d}(\x)+\tau \cdot \ln \hat{\bm{\beta}})_{g'}},
\end{equation}
where $\tau \in \mathbb{R}_{+}$ is a hyper-parameter.

\noindent\textbf{Step 3: Aggregate weights.}
The final debiased classifier $f_\mathsf{d}(\x): \mathcal{X} \rightarrow \mathbb{R}^{K}$ is constructed by aggregating the weights corresponding to each class:

\begin{equation}\label{eq:aggregate_w}
    f_\mathsf{d}(\x)_y= \mathbf{w}_y^\top \x,\ \text{where}\ \mathbf{w}_y^\top=\sum_{g\in \hat{\mathcal{G}}(y)}W_{\mathsf{d},g}.
\end{equation}
Here, $W_{\mathsf{d},g}$ represents the weight vector in $W_\mathsf{d}$ associated with group $g$. This weight-space aggregation transforms group classification into class prediction, eliminating the need for group inference overhead. 

In~\cref{propo:gla}, we prove that the debiased model $f_\mathsf{d}(\x)$ is the \textit{Bayes} optimal classifier for minimizing the balanced group error. 
Beyond linear classifiers, we also employ other parameter-efficient fine-tuning paradigms (\eg, prompt tuning and adapters) to implement our debiased classifier $f_\mathsf{d}(\x)$, as detailed in~\cref{sec:ablation}. The full pipeline is outlined in~\cref{algo:all}.
\begin{algorithm}[t]
\caption{Pipeline of \ours}\label{algo:all}
\begin{algorithmic}[1]
\State \textbf{Given}: Training dataset $\mathcal{D}=\{\x_i, y_i\}_{i=1}^N$, 
 class proxies $\{\z_j\}_{j=1}^K$.
\Statex \textbf{Step 1: Project}
\State Compute the projection matrix $\Pi$ using~\cref{eq:Pi}.
\State Build $f_\mathsf{b}(\x) =  W_\mathsf{b}\Pi\x$ and optimize $W_\mathsf{b}$ using~\cref{eq:la}.
\Statex \textbf{Step 2: Probe}
\State Identify pseudo groups $\{\hat{g}\}$ via~\cref{eq:pseudo}.
\State Build $h_\mathsf{d}(\x)=W_\mathsf{d}\x$ and optimize $W_\mathsf{d}$ using~\cref{eq:gla}.
\Statex \textbf{Step 3: Aggregate}
\State Construct $f_\mathsf{d}(\x)$ by aggregating weights of $W_\mathsf{d}$, as in~\cref{eq:aggregate_w}.
\State \textbf{Return} $f_\mathsf{d}(\x)$.
\end{algorithmic}
\end{algorithm}

\section{Theoretical Analysis}
\subsection{Removal of Class Proxies  Amplifies Model Bias}\label{sec:train_biased}

Intuitively, removing class information forces the model $f_\mathsf{b}$  to rely more on spurious features for predicting class labels. To support this intuition, we provide theoretical evidence in a linear regression setup. Let $\vc$ denote the core features which are stable for predicting the target $y$ and $s$ be a spurious feature. The spurious feature is correlated with the target in the training set, but this correlation may not hold during testing.
 Suppose we observe $n$ features, stacked as $C=[\vc_1,\dots,\vc_n]^\top \in \mathbb{R}^{N\times d}$ and $\s=[s_1,\dots,s_n]^\top \in \mathbb{R}^{N}$. We are interested in the contribution of the spurious feature $s$ across the following two models:
\begin{itemize}
    \item \textbf{Full model:} Linear regression on the core features $\vc$ and the spurious feature $s$:
    \begin{equation}\label{eq:full_model}
    \mathbf{y}=C\bm{\beta}+\gamma \s + \bm{\varepsilon},
    \end{equation}
    where $\bm{\beta}$ and $\gamma$ are the weights associated with the core features and the spurious feature, respectively. $\bm{\varepsilon}$ is a noise term with an expected value of $0$.
    \item \textbf{Projected model:} Applying the projection matrix $\Pi$ to $C$ to obtain $\tilde{C}=C\Pi$, followed by linear regression on $\tilde{\vc}$ and $s$:
    \begin{equation}
         \mathbf{y}=\tilde{C} \tilde{\bm{\beta}} + \gamma'\s + \bm{\varepsilon}',
    \end{equation}
    where $\tilde{\bm{\beta}}$ and $\gamma'$ are the weights for the projected core features and the spurious feature, respectively.
\end{itemize}
In~\cref{eq:full_model}, the core features $C$ can be decomposed into the remaining part $\tilde{C}$ and the projected-out part $C_\mathsf{o}$.
\begin{equation}
    C=C\Pi + C(I-\Pi)=\tilde{C}+C_\mathsf{o}.
\end{equation}
Let $\mathbf{y}_\mathsf{o}=C_\mathsf{o}\bm{\beta}$ denote the contribution of the projected-out core features in the full model. Define $M=I-\tilde{C} (\tilde{C}^\top\tilde{C})^{-1}\tilde{C}^\top$, $\mathbf{r}_{\mathbf{y}_\mathsf{o}}=M\mathbf{y}_\mathsf{o}$ and $\mathbf{r}_{\s}=M\s$.
The following proposition states that projecting out core features can make the model more susceptible to spurious feature (proof in~\cref{sec:proof1}). 
\begin{proposition}\label{prop:weight}
    The weight of the spurious feature after  projection is
    \begin{equation}
        \gamma'=\gamma + \frac{\mathbf{r}_{\s}^\top  \mathbf{r}_{\mathbf{y}_\mathsf{o}}}{\mathbf{r}_{\s}^\top \mathbf{r}_{\s}}. 
    \end{equation}
\end{proposition}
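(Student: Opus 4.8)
The plan is to recognize the claim as an instance of the Frisch--Waugh--Lovell (partialling-out) identity applied to the projected regression, and then substitute the structural form of $\mathbf{y}$ supplied by the full model. Throughout I assume the regularity conditions needed for the statement to make sense: $\tilde{C}^\top\tilde{C}$ is invertible (so $M = I - \tilde{C}(\tilde{C}^\top\tilde{C})^{-1}\tilde{C}^\top$ is well defined) and $\mathbf{r}_{\s} = M\s \neq \mathbf{0}$ (i.e.\ $\s$ is not in the column space of $\tilde{C}$, so the final ratio is defined).

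First, I would rewrite the regressand. Since $C = \tilde{C} + C_\mathsf{o}$ and $\mathbf{y}_\mathsf{o} = C_\mathsf{o}\bm{\beta}$, the full model~\cref{eq:full_model} reads $\mathbf{y} = \tilde{C}\bm{\beta} + \gamma\s + \mathbf{y}_\mathsf{o} + \bm{\varepsilon}$, so the projected model is exactly the OLS regression of this $\mathbf{y}$ onto the design $[\tilde{C}\ \ \s]$. Next I would apply the partialling-out identity: the OLS coefficient $\gamma'$ on $\s$ in a regression of $\mathbf{y}$ on $[\tilde{C}\ \ \s]$ equals the coefficient of the univariate regression of $M\mathbf{y}$ on $M\s$, hence
\begin{equation}
\gamma' = \frac{(M\s)^\top M\mathbf{y}}{(M\s)^\top M\s} = \frac{\mathbf{r}_{\s}^\top M\mathbf{y}}{\mathbf{r}_{\s}^\top \mathbf{r}_{\s}},
\end{equation}
where I used that $M$ is symmetric and idempotent. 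If I prefer a self-contained argument instead of quoting Frisch--Waugh--Lovell, this follows directly from the two block normal equations $\tilde{C}^\top(\mathbf{y} - \tilde{C}\tilde{\bm{\beta}} - \gamma'\s) = \mathbf{0}$ and $\s^\top(\mathbf{y} - \tilde{C}\tilde{\bm{\beta}} - \gamma'\s) = 0$: solving the first for $\tilde{\bm{\beta}}$ and back-substituting into the second isolates $\gamma'$ in the form above.

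Finally, I would substitute the structural expression for $\mathbf{y}$ into $M\mathbf{y}$. Using $M\tilde{C} = \mathbf{0}$ and $\mathbf{r}_{\mathbf{y}_\mathsf{o}} = M\mathbf{y}_\mathsf{o}$,
\begin{equation}
M\mathbf{y} = M\tilde{C}\bm{\beta} + \gamma M\s + M\mathbf{y}_\mathsf{o} + M\bm{\varepsilon} = \gamma\,\mathbf{r}_{\s} + \mathbf{r}_{\mathbf{y}_\mathsf{o}} + M\bm{\varepsilon}.
\end{equation}
Plugging this in and discarding the mean-zero noise contribution (equivalently, taking expectations over $\bm{\varepsilon}$, for which $\mathbb{E}[M\bm{\varepsilon}] = \mathbf{0}$ since $M$ is fixed, or regressing on the noiseless signal) gives
\begin{equation}
\gamma' = \frac{\mathbf{r}_{\s}^\top\big(\gamma\,\mathbf{r}_{\s} + \mathbf{r}_{\mathbf{y}_\mathsf{o}}\big)}{\mathbf{r}_{\s}^\top \mathbf{r}_{\s}} = \gamma + \frac{\mathbf{r}_{\s}^\top \mathbf{r}_{\mathbf{y}_\mathsf{o}}}{\mathbf{r}_{\s}^\top \mathbf{r}_{\s}},
\end{equation}
which is the claim. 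The only genuinely delicate point — and where I would be careful — is the bookkeeping around the noise term: the cross term $\mathbf{r}_{\s}^\top M\bm{\varepsilon}$ vanishes only in expectation (or exactly if one defines $\gamma,\gamma'$ via the deterministic part of the signal), so I would make that interpretation explicit; everything else is routine linear algebra with the idempotent projectors $\Pi$ and $M$.
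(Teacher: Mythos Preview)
Your proposal is correct and follows essentially the same route as the paper: both arguments obtain $\gamma' = \mathbf{r}_{\s}^\top M\mathbf{y}/(\mathbf{r}_{\s}^\top\mathbf{r}_{\s})$ by partialling out $\tilde{C}$ (the paper does this by left-multiplying the projected regression by $M$ and using $M\tilde{C}=0$, which is exactly the Frisch--Waugh--Lovell step you name), then substitute the full-model decomposition $\mathbf{y}=\tilde{C}\bm{\beta}+\mathbf{y}_\mathsf{o}+\gamma\s+\bm{\varepsilon}$ and drop the noise cross term. Your explicit regularity assumptions and your remark that the noise term vanishes only in expectation are more careful than the paper's treatment, but the substance is identical.
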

\noindent\textbf{Remark.} Since the denominator $\mathbf{r}_{\s}^\top \mathbf{r}_{\s}$ is non-negative, the model relies more on spurious feature, \ie, $\gamma'>\gamma$, if $\mathbf{r}_{\s}^\top  \mathbf{r}_{\mathbf{y}_\mathsf{o}}>0$.  In other words, if the spurious feature $s$ is positively correlated with the contribution of projected-out core features $\vc_\mathsf{o}$ in the space of $M$, the weight of the spurious feature will increase. For instance, consider the Waterbirds dataset~\cite{sagawa2019distributionally}, 
the spurious feature \textit{water} background is positively correlated with the core feature \textit{waterbird} (about $95\%$ of \textit{waterbird} images have a \textit{water} background). If we remove the contribution of \textit{waterbird} by projection, the weight of the spurious feature \textit{water} will increase. 

\subsection{Group Classification and Aggregation Mitigate Spurious Correlation}\label{sec:gla}

The goal of training a debiased model to avoid spurious correlation is to minimize the Balanced Group Error ($\mathsf{BGE}$) rates~\cite{liuavoiding}:
\begin{equation}\label{eq:ber}
    \mathsf{BGE}(f)=\frac{1}{|\mathcal{G}|}\sum_{g\in \mathcal{G}}\mathbb{E}_{\x|g} [y\neq \argmax_{y'\in \mathcal{Y}}f(\x)_{y'}]
\end{equation}
A natural question is: 
what is the \textit{Bayes} optimal classifier for this problem, \ie, $f^* \in \arg\min_f \mathsf{BGE}(f)$. Suppose the underlying group-probabilities $\mathbb{P}(g|\x) \propto \exp(h^*(\x)_g)$ for unknown scorer: $h^*: \mathcal{X} \rightarrow \mathbb{R}^{|\mathcal{G}|}$. 
The \textit{Bayes} optimal classifier $f^*$ is derived from the following Proposition (proof in~\cref{sec:proof2}):
\begin{proposition}\label{propo:gla}
    Let $\mathcal{G}(y)$ denote the set of groups with class label $y$, \ie, $\mathcal{G}(y):=\{g=(y',a)\in \mathcal{G}|y'=y\}$. Let $\bm{\beta}$ denote the group priors, \ie, $\bm{\beta}_g=\mathbb{P}(g)$. The prediction :
    \begin{equation}\label{eq:bayes}
        \arg\max_{y\in \mathcal{Y}} f^*(\x)_y=\arg\max_{y\in \mathcal{Y}} \sum_{g\in \mathcal{G}(y)}(h(\x)-\ln \bm{\beta})_g
    \end{equation}
    is Bayes optimal for the problem in~\cref{eq:ber}.
\end{proposition}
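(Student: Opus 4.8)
The plan is to characterize the Bayes optimal classifier for $\mathsf{BGE}$ directly and then verify that the aggregation rule in \cref{eq:bayes} recovers it. First I would rewrite $\mathsf{BGE}(f)$ in a form that exposes the optimization over $f$ pointwise in $\x$. Using $\mathbb{E}_{\x|g}[\cdot] = \int \mathbb{P}(\x|g)\,(\cdot)\,d\x$ and Bayes' rule $\mathbb{P}(\x|g) = \mathbb{P}(g|\x)\mathbb{P}(\x)/\mathbb{P}(g) = \mathbb{P}(g|\x)\mathbb{P}(\x)/\bm{\beta}_g$, I can pull the $1/|\mathcal{G}|$ and the $1/\bm{\beta}_g$ inside and write $\mathsf{BGE}(f) = \frac{1}{|\mathcal{G}|}\int \mathbb{P}(\x)\sum_{g\in\mathcal{G}} \frac{\mathbb{P}(g|\x)}{\bm{\beta}_g}\,\mathds{1}[y_g \neq \argmax_{y'} f(\x)_{y'}]\,d\x$, where $y_g$ is the class component of $g$. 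Since the integrand is a nonnegative weighting of the "wrong prediction" indicator and the choice of $\argmax_{y'} f(\x)_{y'}$ at each $\x$ is unconstrained, the minimizer picks, for each $\x$, the class $y$ that minimizes the total weight of groups whose class label differs from $y$ — equivalently, maximizes the total weight of groups in $\mathcal{G}(y)$.

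Next I would carry out this pointwise minimization. For a fixed $\x$, define $q(y) := \sum_{g\in\mathcal{G}(y)} \mathbb{P}(g|\x)/\bm{\beta}_g$. The wrong-prediction weight when predicting $y$ is $\sum_{y'\neq y} q(y') = \big(\sum_{y'} q(y')\big) - q(y)$, and since the first term is independent of $y$, minimizing over $y$ is the same as choosing $y \in \argmax_y q(y)$. So the Bayes optimal rule is $\argmax_y \sum_{g\in\mathcal{G}(y)} \mathbb{P}(g|\x)/\bm{\beta}_g$. Now I substitute the modeling assumption $\mathbb{P}(g|\x) \propto \exp(h^*(\x)_g)$, so $\mathbb{P}(g|\x)/\bm{\beta}_g \propto \exp(h^*(\x)_g - \ln\bm{\beta}_g) = \exp\big((h^*(\x) - \ln\bm{\beta})_g\big)$, with a proportionality constant (the partition function of $\mathbb{P}(\cdot|\x)$) that is the same for every $g$ and hence every $y$. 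Thus the optimal rule is $\argmax_y \sum_{g\in\mathcal{G}(y)} \exp\big((h^*(\x) - \ln\bm{\beta})_g\big)$.

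The remaining gap is that \cref{eq:bayes} uses a \emph{sum of logits} $\sum_{g\in\mathcal{G}(y)} (h(\x) - \ln\bm{\beta})_g$ rather than a sum of exponentials (a log-sum-exp). I would address this either (a) by noting that the paper is implicitly passing to the regime where one group per class dominates — then $\ln\sum_{g\in\mathcal{G}(y)}\exp((\cdot)_g) \approx \max_{g\in\mathcal{G}(y)}(\cdot)_g$ — or, more cleanly, (b) by observing that in the binary-attribute setting $|\mathcal{G}(y)|=2$ and working the argmax through explicitly; or (c) by adopting the convention that the relevant score is $\sum_{g\in\mathcal{G}(y)}(h(\x)-\ln\bm\beta)_g$ as the definition of $f_{\mathsf d}$ from \cref{eq:aggregate_w} (note $f_{\mathsf d}(\x)_y = \mathbf{w}_y^\top\x = \sum_{g\in\hat{\mathcal G}(y)} W_{\mathsf d,g}\x$ is exactly a sum of group logits) and showing monotonicity: since $\exp$ is increasing and the $\ln\bm\beta$ shift plays the role of the logit adjustment that makes $h_{\mathsf d}$ consistent for $\mathbb{P}(g|\x)$ after the $\tau\ln\hat{\bm\beta}$ correction in \cref{eq:gla}, the summed-logit ranking agrees with the summed-exponential ranking under the paper's product/independence structure on $\mathbb{P}(g|\x)$.

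I expect the main obstacle to be precisely this last point — reconciling the log-sum-exp that falls out of the honest Bayes computation with the plain summation written in \cref{eq:bayes}. The clean resolution is to state the extra structural assumption the argument needs (either a dominant-group condition, or that $h^*$ is additively separable across the class and attribute coordinates so that $\sum_{g\in\mathcal{G}(y)}h^*(\x)_g$ is an affine function of the class-marginal log-odds), make it explicit, and then the monotone correspondence between the two scoring rules — and hence Bayes optimality of $f_{\mathsf d}$ — follows. Everything else (the pointwise reduction, the Bayes-rule substitution, and the prior correction $-\ln\bm\beta$) is routine and mirrors the standard logit-adjustment argument of \cite{menonlong}.
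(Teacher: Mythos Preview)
Your argument tracks the paper's proof almost line for line: both rewrite $\mathsf{BGE}$ via Bayes' rule into a pointwise minimization, both arrive at the Bayes rule $\arg\max_y \sum_{g\in\mathcal{G}(y)} \mathbb{P}(g|\x)/\bm{\beta}_g$, and both then substitute $\mathbb{P}(g|\x)\propto\exp(h(\x)_g)$. Where you diverge is that you pause and flag the discrepancy between the honest result $\sum_{g\in\mathcal{G}(y)}\exp\big((h(\x)-\ln\bm\beta)_g\big)$ and the claimed $\sum_{g\in\mathcal{G}(y)}(h(\x)-\ln\bm\beta)_g$ of \cref{eq:bayes}. The paper, by contrast, simply writes $\exp(h(\x)_g)/\exp(\ln\bm\beta_g)\propto (h(\x)-\ln\bm\beta)_g$ and then invokes ``proportional scaling does not change argmax results'' to pass to \cref{eq:bayes}.

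That step in the paper is not valid as written: $\exp(u)$ is not proportional to $u$, and once one sums several such terms over $g\in\mathcal{G}(y)$ the log-sum-exp and plain-sum scoring rules can disagree on the argmax (a two-class, two-attribute counterexample with adjusted logits $(10,-100)$ versus $(0,0)$ already breaks it). So the gap you identify is real and is not closed in the paper's own proof either. Your options (a)--(c) are reasonable heuristic patches, but none is free; the exact Bayes rule here is the log-sum-exp aggregation, and the additive aggregation in \cref{eq:bayes} matches it only under extra structure (one dominant group per class, or $|\mathcal{G}(y)|=1$, or an additive-separability assumption on $h^*$). In short, your derivation is correct and more careful than the paper at precisely the point where the paper's argument is loose; you should not expect to find a clean justification for the missing step without adding an assumption.
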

\noindent\textbf{Remark.} We translate the unknown distributional class scores based on group logits $h(\x)$ and group priors $\bm{\beta}$. To obtain $f^*$, we can learn the ERM solution for group logits $h(\x)$. Then, we can minus the logarithm of the group prior $\ln\bm{\beta}$ and aggregate the logits that belongs to $\mathcal{G}(y)$ to yield the prediction for each class $y$. 
In practice, instead predict \cref{eq:bayes}, we introduce a  hyper-parameter $\tau \in \mathbb{R}_+$:
\begin{equation}
        \arg\max_{y\in \mathcal{Y}} f^*(\x)_y=\arg\max_{y\in \mathcal{Y}} \sum_{g\in \mathcal{G}(y)}(h(\x)-\tau\cdot \ln \bm{\beta})_g
\end{equation}

Inspired by Menon \etal~\cite{menonlong}, we incorporate logit adjustment into the softmax cross-entropy $\ell_\mathsf{ce}$. To do so, we directly enforce group priors offset while learning the group classifier $h_\mathsf{d}(\x)$. We refer to this as the group logit adjustment loss:
\begin{equation}
    \ell_\mathsf{gla}(\hat{g},h_\mathsf{d}(\x))= \ell_\mathsf{ce}(\hat{g},h_\mathsf{d}(\x)+\tau \cdot \ln \hat{\bm{\beta}}),
\end{equation}
which is equivalent to~\cref{eq:gla}.
Now, the estimated debiased classifier $f_\mathsf{d}(\x)$ becomes:
\begin{equation}\label{eq:ose}
    f_\mathsf{d}(\x)_y=\sum_{g\in \hat{\mathcal{G}}(y)}h_\mathsf{d}(\x)_g=\sum_{g\in \hat{\mathcal{G}}(y)} W_{\mathsf{d},g}\x
\end{equation}
Since $h_\mathsf{d}$ is linear, summing over the output-space  is equivalent to aggregating in weight-space, eliminating the overhead of group inference. 

\section{Experiments}
In this section, we evaluate the effectiveness of our method across five benchmarks: Waterbirds~\cite{sagawa2019distributionally}, CelebA~\cite{liu2015deep}, MetaShift~\cite{liangmetashift}, BAR~\cite{nam2020learning}, and Living-17~\cite{santurkar2021breeds}. For brevity, we provide an overview of our experimental setup here; further details can be found in~\cref{sec:addition-details}.

\subsection{Datasets}
\noindent\textbf{Waterbirds}~\cite{sagawa2019distributionally} was created by overlaying bird images from the Caltech-UCSD Birds~\cite{wah2011caltech} dataset onto background scenes from the Places~\cite{zhou2017places} dataset. The target classes are bird types ($\mathcal{Y}=\{\text{waterbird},\text{landbird}\}$), while the spurious attribute is the scene type ($\mathcal{A}=\{\text{water}, \text{land}\}$). Approximately $95\%$ of the waterbird training samples are associated with a water background.

\noindent\textbf{CelebA}~\cite{liu2015deep} is a real-world dataset comprising 200K celebrity portrait images. The objective is to classify hair color ($\mathcal{Y}=\{\text{not blond},\text{blond}\}$), with gender ($\mathcal{A}=\{\text{male}, \text{female}\}$) as a spurious attribute. Notably, over 94\% of the blond-haired training images feature women.

\noindent\textbf{MetaShift}~\cite{liangmetashift}: The classification objective of MetaShift is to distinguish between cats and dogs, with background type ($\mathcal{A}=\{\text{indoor}, \text{outdoor}\}$) as a spurious attribute. Following~\cite{yang2023change}, we use a pre-processed version of MetaShift. In the training set, an inherent bias emerges: cats predominantly appear in indoor scenes, while dogs are more frequently seen in outdoor settings. 

\noindent\textbf{BAR}~\cite{nam2020learning} is a real-world dataset designed to classify six actions that are each biased toward specific locations. In the original training set, the six action-location pairs are \{climbing, rock wall\}, \{diving, underwater\}, \{fishing, water surface\}, \{racing, paved track\}, \{throwing, playing Field\}, and \{vaulting, sky\}. However, the test set features different locations than the training set. We adapted the dataset for a group robustness setting by incorporating $5\%$ of the test set images into the training set.

\noindent\textbf{Living-17}~\cite{santurkar2021breeds}: 
Each class in the Living-17 represents a broad animal category that encompasses several fine-grained groups. 
The groups in the training and testing sets can exhibit noticeable visual differences, \eg, the bear class includes images of sloth bear and ice bear. Following~\cite{zhang2022contrastive}, $5\%$ of the images in each testing group are added to the training groups.

\begin{table*}[ht]
\begin{center}
    \caption{\textbf{Evaluation of methods for improving group robustness of CLIP models across the Waterbirds, CelebA, and MetaShift benchmarks.} Best worst-group accuracy (WGA) of the methods without group labels are in {\textbf{bold}}.}
\label{table:results_main}
\begin{adjustbox}{width=0.95\linewidth}
\begin{tabular}{cccccccccccccc}
	\toprule
	& & \multicolumn{6}{c}{\textbf{CLIP ResNet-50}} & \multicolumn{6}{c}{\textbf{CLIP ViT-L/14}}\\
	\cmidrule(r){3-8} \cmidrule(r){9-14}
 Group labels & & \multicolumn{2}{c}{Waterbirds} & \multicolumn{2}{c}{CelebA} & \multicolumn{2}{c}{MetaShift} & \multicolumn{2}{c}{Waterbirds} & \multicolumn{2}{c}{CelebA} & \multicolumn{2}{c}{MetaShift} \\ 
    \cmidrule(r){3-4} \cmidrule(r){5-6} \cmidrule(r){7-8} \cmidrule(r){9-10} \cmidrule(r){11-12} \cmidrule(r){13-14} 
in train sets?			&{Method}
		            & WGA
		            & Avg
		            & WGA
		            & Avg
		            & WGA
		         
		            & Avg
		            & WGA
		            & Avg
		            & WGA
		            & Avg
                  
                    & WGA
                    & Avg
		            \\ \midrule
        \multirow{4}{*}{\ding{51}}
        &{GroupDRO~\cite{sagawa2019distributionally}}
                    & {75.1}
                    & {83.8}
                    & {84.1}
		            & {89.5}
                    
                    & {83.2}
                    & {87.3}
                    & {90.8}
		            & {96.4}
                    & {88.3}
                    & {91.2}
                   
                    & {93.9}
                    & {97.4}
                    \\
        &{S-CS~\cite{yang2023mitigating}}
                    & {77.5}
                    & {83.2}
                    & {75.2}
		            & {80.4}         
                    & {81.2}
                    & {89.8}
                    & {89.1}
		            & {95.7}
                    & {86.1}
                    & {89.3}
                    & {92.3}
                    & {97.1}
                    \\
        &{S-CL~\cite{yang2023mitigating}}
                    & 75.2
                    & 86.0
                    & 75.6
		            & 80.4
                    & 81.5
                    & 88.8
                    & 89.9
		            & 96.0
                    & 87.8
                    & 90.5
                    & 93.1
                    & 96.9
                    \\
        &{DFR~\cite{kirichenko2022last}}
                    & {73.2}
                    & {83.8}
                  
                    & {80.0}
                    & {92.8}
                    & {83.1}
                    & {88.3}
                    & {89.7}
		            & {97.8}
                    & {85.6}
                    & {90.8}
                   
                    & {92.3}
                    & {97.0}
                    \\\midrule 
        \multirow{12}{*}{\ding{55}}
        & Zero-Shot (ZS)~\cite{radford2021learning}
                    & {54.2}
                    & {92.4}
                    & {55.0}
		            & {88.0}
                   
                    & {86.2}
                    & {95.4}
                    & {26.5}
		            & {88.2}
                    & {27.0}
                    & {85.9}
                    
                    & {93.2}
                    & {96.2}
                    \\
        & Group Prompt ZS~\cite{radford2021learning}
                    & {46.4}
                    & {91.7}
                    & {53.4}
		            & {73.5}
                   
                    & {84.6}
                    & {95.2}
                    & {25.4}
		            & {85.8}
                    & {66.9}
                    & {83.1}
                  
                    & {93.9}
                    & {96.7}
                    \\
        & ERM~\cite{vapnik1991principles}
                    & {7.9}
                    & {93.5}
                    & {11.9}
		            & {94.7}
                
                    & {75.4}
                    & {94.4}
                    & {65.9}
		            & {97.6}
                    & {28.3}
                    & {94.7}
                    
                    & {84.6}
                    & {96.7}
                    \\
        & WiSE-FT~\cite{wortsman2022robust}
                    & {49.8}
                    & {91.0}
                    & {85.6}
		            & {88.6}
                   
                    & {86.2}
                    & {95.4}
                    & {65.9}
		            & {97.6}
                    & {80.0}
                    & {87.4}
                    
                    & {93.9}
                    & {97.2}
                    \\
        & Orth-Cali~\cite{chuang2023debiasing}
                    & {74.0}
                    & {78.7}
                    & {82.2}
		            & {84.4}
                  
                    & {86.2}
                    & {94.8}
                    & {68.8}
		            & {84.5}
                    & {76.1}
                    & {86.2}
                   
                    & {92.7}
                    & {96.2}
                    \\
        &{AFR~\cite{qiu2023simple}}
                    & {48.4}
                    & {89.3}
                    & {53.4}
		            & {94.3}
                  
                    & {76.9}
                    & {86.8}
                    & {73.4}
		            & {88.2}
                    & {70.0}
                    & {85.2}
                  
                    & {90.3}
                    & {97.1}
                    \\
        &{JTT~\cite{liu2021just}}
                    & {61.7}
                    & {90.6}
                    & {60.2}
		            & {79.9}
                 
                    & {78.5}
                    & {89.4}
                    & {83.6}
		            & {97.3}
                    & {75.6}
                    & {93.3}
                 
                    & {91.2}
                    & {94.2}
                    \\
        &{CnC~\cite{zhang2022correct}}
                    & {61.2}
                    & {87.1}
                    & {63.9}
		            & {90.3}
               
                    & {78.3}
                    & {87.1}
                    & {84.5}
		            & {97.5}
                    & {79.2}
                    & {89.3}
             
                    & {92.2}
                    & {94.7}
                    \\
     &{CA~\cite{zhang2022contrastive}}
                    & 83.7
                    & 89.4
                    & 90.0
		            & 90.7
                    & 77.9
                    & 85.5
                    & 86.9
		            & 96.2
                    & 84.6
                    & 90.4
                    & 91.3
                    & 93.4
                    \\
        & CFR~\cite{you2024calibrating}
                    &  76.9
                    & 77.6
                    & 73.7
                    &  81.1
                    &  81.5
                    &  89.5
                    &  \textbf{88.2}
		            &  96.8
                    &  84.8
                    &  87.8
                    &  93.7
                    &  95.5
                    \\ 
    \rowcolor{tabhighlight}
   \cellcolor{white}        & $\ours$ (ours)
                    &  \textbf{84.3}
                    &  88.3
                    &  \textbf{91.1}
                    &  92.1
                    &  \textbf{90.8}
                    &  94.7
                    &  87.2
		            &  94.6
                    &  \textbf{90.4}
                    &  91.0
                    &  \textbf{94.8}
                    &  96.8
                    \\ 
	\bottomrule
	\end{tabular}
    \end{adjustbox}
    \end{center}
\end{table*}

\subsection{Baselines}
We compare our $\ours$ against 14 baselines: (1) Zero-shot prompting~\cite{radford2021learning}, which matches image features with classification weights by extending class names into pre-defined prompts, \eg, ``a photo of a \textit{waterbird}.''. (2) Group-informed prompting~\cite{radford2021learning}, which incorporates group descriptions into the prompts, \eg, ``a photo of a \textit{waterbird} on \textit{land}.''. (3) ERM~\cite{vapnik1991principles}, a linear classifier trained using cross-entropy loss, (4) WiSE-FT~\cite{wortsman2022robust}, which interpolates zero-shot and ERM models in weight-space, (5) Orth-Cali~\cite{chuang2023debiasing}, a debiased zero-shot model that removes biased direction in the text embedding. (6) AFR~\cite{qiu2023simple}, retrains the last layer of an ERM model with a weighted loss that emphasizes examples where the ERM
model performs poorly,  (7)
JTT~\cite{liu2021just}, (see details in~\cref{sec:commonstrategy}), (8) CnC~\cite{zhang2022correct}, which builds upon JTT~\cite{liu2021just}, but
it uses a contrastive loss to train the debiased model, (9)
CA~\cite{zhang2022contrastive}, trains contrastive adapters by learning similar representations across different groups, (10) CFR~\cite{you2024calibrating}, refines the representation by aligning them with the class centroids via contrastive loss. We also compare with three group-supervised methods -- (11) GroupDRO~\cite{sagawa2019distributionally}, (12) DFR~\cite{kirichenko2022last}, (13) S-CS~\cite{yang2023mitigating} and (14) S-SL~\cite{yang2023mitigating}.

\subsection{Training and Evaluation Details}
For each method, we evaluate both its worst-group accuracy and average (in-distribution) accuracy on the test sets. We use CLIP ResNet-50 and CLIP ViT-L/14 as primary models, while also reporting results for CLIP ResNet-101, ViT-B/32, and ViT-B/16. Following prior work~\cite{zhang2022contrastive,you2024calibrating,yang2023mitigating}, we freeze the model backbones and train all models using the SGD optimizer for 100 epochs. The hyperparameter $\tau$ is set to \{1.2, 1.0, 0.9, 1.0, 1.0\} for Waterbirds, CelebA, MetaShift, Living-17, and BAR, respectively. Model selection is based on the highest worst-group validation accuracy, except for BAR, which lacks a validation set; for BAR, we evaluate the checkpoint from the last epoch. Additionally, since the majority groups in training set do not appear in testing, we measure only the worst-group accuracy for BAR. Further implementation details are provided in~\cref{sec:addition-details}.

\begin{table}[t]
\begin{center}
    \caption{Experimental results on Living-17 and BAR benchmarks.}
\label{table:results_living_bar}
\begin{adjustbox}{width=0.95\linewidth}
\begin{tabular}{ccccccc}
	\toprule
	 & \multicolumn{3}{c}{\textbf{CLIP ResNet-50}} & \multicolumn{3}{c}{\textbf{CLIP ViT-L/14}}\\
	\cmidrule(r){2-4} \cmidrule(r){5-7}
  & \multicolumn{2}{c}{Living-17} & \multicolumn{1}{c}{BAR}  & \multicolumn{2}{c}{Living-17} & \multicolumn{1}{c}{BAR}  \\ 
    \cmidrule(r){2-3} \cmidrule(r){4-4} \cmidrule(r){5-6} \cmidrule(r){7-7} 
		{Method} 
		            & WGA
		            & Avg
		            & WGA
		         
		            & WGA
		            & Avg
		            & WGA
                  
		            \\ \midrule
         ZS~\cite{radford2021learning}
                    & 38.0
                    & 83.2
                    & 74.4
		            & 48.0
                    & 92.3
		            & 81.7
                    \\
         ERM~\cite{vapnik1991principles}
                    & 53.3
                    & 90.8                
                    & 54.9
                    & 84.0
                    & 98.6
		            & 89.0
                    \\
        {JTT~\cite{liu2021just}}

                    & 44.0
                    & 86.4
                    & 58.5
		            & 78.7
                    & 97.3
                    & 90.2

                    \\
     {CA~\cite{zhang2022contrastive}}
                    & 62.0
                    & 90.9
                    & -
		            & 80.0
                    & 97.5
                    & -
                    \\
    \rowcolor{tabhighlight}
   $\ours$ (ours)
           
                    &  \textbf{65.0}
                    &  92.5
                    &  \textbf{76.8}
                    &  \textbf{85.0}
                    &  98.1
		            &  \textbf{91.5}
                    \\ 
	\bottomrule
	\end{tabular}
    \end{adjustbox}
    \end{center}
\end{table}

\subsection{Main Results}

\cref{table:results_main} presents the worst-group accuracy (WGA) and average (in-distribution) accuracy for all methods on Waterbirds~\cite{sagawa2019distributionally}, CelebA~\cite{liu2015deep} and MetaShift~\cite{liangmetashift} datasets. Compared to other group-unsupervised debiasing approaches, our $\ours$ achieves superior performance with less than $0.1\%$ trainable parameters (CLIP ResNet-50 has 25,557,032 parameters, while our model $f_\mathsf{d}(\cdot)$ has only 4,096 trainable parameters). Specifically, $\ours$ based on CLIP ResNet-50 outperforms the previous state-of-the-art, CFR~\cite{you2024calibrating}, with improvements of \{7.4\%, 17.4\%, 9.3\%\} on Waterbirds, CelebA, and MetaShift, respectively. Moreover, $\ours$ is competitive with group-supervised methods that rely on group annotations. Notably, our improvements in worst-group accuracy come at the cost of a modest reduction in in-distribution accuracy, with an average drop of 1.8 points compared to ERM models using CLIP ViT-L/14.
In~\cref{table:results_living_bar}, we report results for the Living-17~\cite{santurkar2021breeds} and BAR~\cite{nam2020learning} datasets. We again observe that $\ours$ achieves the highest worst-group performance. Notably, for the BAR dataset, where model selection is not performed and only the last checkpoint is used (due to the absence of a validation set), $\ours$ still attains the highest worst-group accuracy.

\subsection{Further Analysis and Ablation Studies}\label{sec:ablation}
We conduct comprehensive studies to (1) confirm the versatility of our proposed method—specifically, its compatibility with other efficient fine-tuning paradigms and model architectures—and (2) to understand the contribution of each component to the overall improvement.

\begin{table}[t]
\caption{$\ours$ consistently improves group robustness for across different efficient fine-tuning paradigms using CLIP ResNet-50. The results using CLIP ViT-L/14 are reported in~\cref{sec:versa-fine-tuning}.}\label{tab:other-finetuning}
\begin{adjustbox}{width=1.\linewidth}
\begin{tabular}{lcccccc}
\toprule
\multirow{2}{*}{Method} & \multicolumn{2}{c}{Waterbirds} & \multicolumn{2}{c}{CelebA} &  \multicolumn{2}{c}{MetaShift} \\
    \cmidrule(r){2-3} \cmidrule(r){4-5} \cmidrule(r){6-7} 
                        & WGA            & Avg           & WGA          & Avg         & WGA                   & Avg           \\ \midrule
Linear Probe + ERM            &  7.9  
& 93.5
& 11.9 
& 94.7
& 75.4
& 94.4              \\
\rowcolor{tabhighlight}
Linear Probe + PPA             & 84.3               
& 88.3
& 91.1
& 92.1
& 90.8
& 94.7        \\ \midrule   
CoOp + ERM     
& 61.7
& 96.2
& 35.0
& 94.8
& 75.4
& 92.8            \\
\rowcolor{tabhighlight}
CoOp + PPA       
& 83.0
& 87.6
& 90.0
& 92.2
& 89.0
& 93.3
\\ \midrule
Adapter + ERM     
& 61.4
& 96.7
& 60.6
& 94.4
& 84.6
& 93.9
\\
\rowcolor{tabhighlight}
Adapter + PPA               
&  78.2            
&  91.8          
&  91.5          
&  92.5          
&  87.7            
&  93.4 
\\ \bottomrule
\end{tabular}
\end{adjustbox}
\end{table}
\begin{table}[t]
\caption{$\ours$ consistently improves group robustness on Waterbirds across different architectures.}\label{tab:model-versatility}
\begin{adjustbox}{width=1.\linewidth}
\begin{tabular}{lcccccc}
\toprule
\multirow{2}{*}{Method} & \multicolumn{2}{c}{CLIP RN-101} & \multicolumn{2}{c}{CLIP ViT-B/32} &  \multicolumn{2}{c}{CLIP ViT-B/16} \\
    \cmidrule(r){2-3} \cmidrule(r){4-5} \cmidrule(r){6-7} 
                        & WGA            & Avg           & WGA          & Avg         & WGA                   & Avg           \\ \midrule
Zero-shot           
&  33.6  
&  90.0
& 47.0
& 88.8
& 34.0
& 88.1
\\
ERM LP          
& 62.9              
& 96.5
& 56.8
& 95.6
& 66.5
& 96.4 
\\ 
\rowcolor{tabhighlight}
PPA     
& 83.6
& 89.7
& 83.5
& 89.1
& 84.4
& 89.9 
\\
\bottomrule
\end{tabular}
\end{adjustbox}
\end{table}

\noindent\textbf{Extending \ours~to other parameter-efficient fine-tuning paradigms.}
In this paper, we adopt linear probes for efficient fine-tuning of foundation models. We also explore the applicability of $\ours$ to other efficient fine-tuning methods, such as prompt tuning~\cite{zhou2022learning,zhu2023prompt} (CoOp) and adapter-based tuning~\cite{gao2024clip,zhang2022contrastive,houlsby2019parameter} (small bottleneck MLPs).
Specifically, we replace the linear probe $h_\mathsf{d}(\x)=W_\mathsf{d}\x$ in Step 2 of~\cref{algo:all} with the following models:
\begin{itemize}
    \item Prompt tuning (CoOp). The prompt for group $i$ is designed as:
    \begin{equation}
        \mathbf{t}_i=[V_1^i][V_2^i]...[V_M^i][\texttt{CLASS}],
    \end{equation}
    where $[V_m^i]$ is a vector with the same dimensionality as the word embedding and serves as a parameter to be optimized. We follow CoOp to set $M=16$. [\texttt{CLASS}] represents the word embedding of the class name. Let $\Phi(\cdot)$ denote the CLIP text encoder. The group scorer for group $i$ is then defined as:
    \begin{equation}
    h_\mathsf{d}^{\mathsf{pt}}(\x;\mathbf{t})_i = \Phi(\mathbf{t}_i)^\top \x/\tau_\mathsf{clip},
    \end{equation}
    where  $\tau_\mathsf{clip}$ is the temperature.
    \item Adapter-based tuning. We adopt \cite{gao2024clip} to transform the visual embeddings $\x \in \mathbb{R}^{D}$ and the text embedding $Z \in \mathbb{R}^{K \times D}$ using simple 2-layer bottleneck MLPs. Let $H$ denote the feature the hidden-layer dimension. We follow~\cite{zhang2022contrastive} to set $H=128$. With ReLU function $\sigma(\cdot)$, visual adapter weights $W_\mathsf{v}^1\in \mathbb{R}^{H\times D}, W_\mathsf{v}^2\in \mathbb{R}^{D\times H}$ and textual adapter weights $W_\mathsf{t}^1 \in \mathbb{R}^{H\times D}, W_\mathsf{t}^2 \in \mathbb{R}^{D\times H}$, the input $\x$ and text embeddings $Z$ are adapted as:
    \begin{equation}
    \tilde{\x} = W_\mathsf{v}^2\sigma(W_\mathsf{v}^1\x), \quad \tilde{Z}^\top=W_\mathsf{t}^2\sigma(W_\mathsf{t}^1Z^\top).
    \end{equation}
    We then use the normalized adapted embeddings and the temperature $\tau_\mathsf{clip}$ to output the final scorer:
    \begin{equation}
        h_\mathsf{d}^{\mathsf{at}}(\x)_i =  \tilde{\mathbf{z}}^\top \tilde{\x}/\tau_\mathsf{clip}
    \end{equation}
\end{itemize} 

\begin{table}[t]
\centering
\caption{\textbf{Main component analysis.} We present the worst group accuracies using CLIP ResNet-50. ``\textbf{Proj.}'' and ``\textbf{GLA}'' stands for the projection operation in~\cref{eq:biased-classifier} and group logit adjustment loss in~\cref{eq:gla}, respectively. ``\textbf{GT}'' means we use the ground-truth group labels for training debiased models.}
\label{tab:main_component}
\begin{adjustbox}{width=1.0\linewidth}
\begin{tabular}{cccc|ccc}
\toprule
& \textbf{Proj.} & \textbf{GLA} & \textbf{GT} & Waterbirds & CelebA & MetaShift \\ \midrule
 (a)&     &     &  &  7.9        & 11.9     & 75.4          \\
(b)&\checkmark      &  &    & 54.4 & 29.4      &  86.2        \\
(c) && \checkmark & & 81.6 & 70.0 & 89.2 \\ 
\rowcolor{tabhighlight}
(d)& \checkmark & \checkmark & & 84.3  &91.1 & 90.8 \\ \midrule
 (e)& &\checkmark & \checkmark & 86.8 & 91.5& 91.3 \\  \bottomrule
\end{tabular}
\end{adjustbox}
\end{table}
\begin{figure}[t]
    \centering
    \includegraphics[width=0.47\textwidth]{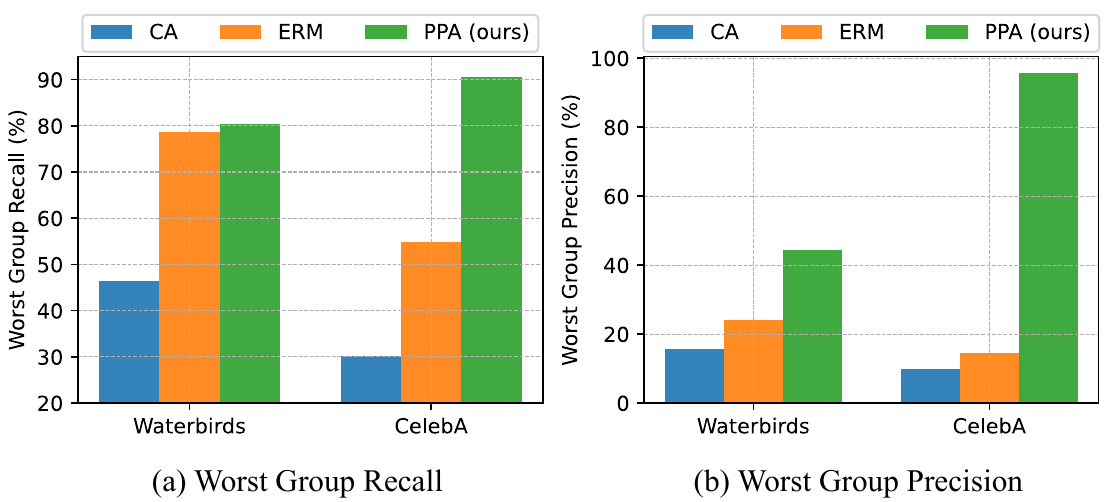}
    \caption{The precision and recall of the worst-group examples on Waterbirds and CelebA using CLIP ResNet-50.}
    \label{fig:worst_rp}
\end{figure}
\begin{figure*}[t]
    \centering
    \includegraphics[width=0.95\textwidth]{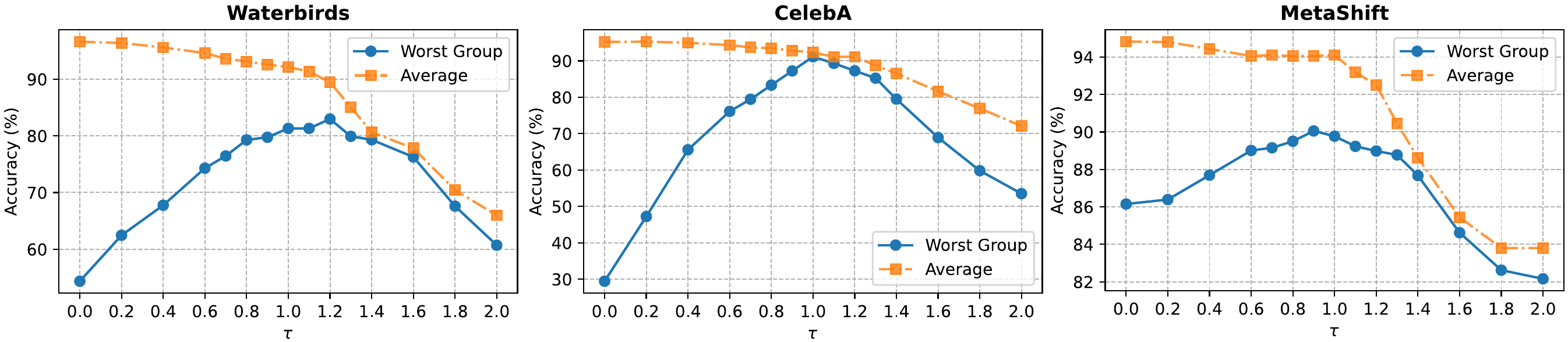}
    \caption{\textbf{Study on the effect of $\tau$ using CLIP ResNet-50.} The optimal worst group accuracy is typically observed around $\tau=1.0$.}
    \label{fig:tau}
\end{figure*}

\cref{tab:other-finetuning} demonstrates consistent improvements in worst-group accuracy over the ERM-trained CLIP ResNet-50 model across three benchmarks, highlighting the versatility of our $\ours$ across various efficient fine-tuning paradigms. For instance, on the CelebA dataset, our $\ours$ achieves worst-group accuracy improvements of 55.0\% with CoOp and 30.9\% with Adapter. Additionally, integrating PPA with these three fine-tuning methods yields comparable performance. However, linear probing is more computationally efficient than prompt tuning, as the latter requires both forward and backward passes through the large text encoder to update the prompts. It is also more storage-efficient than Adapter-based methods, which introduce additional training parameters, \ie, $\theta$ and $\phi$.

\noindent\textbf{Study on model versatility.} 
In~\cref{tab:model-versatility}, we implement our $\ours$ to various backbone architectures, including CLIP ResNet-101, CLIP ViT-B/32 and CLIP ViT-B/16. We observe consisent performance gains on the worst group accuracy, indicating the model versatility of our $\ours$.

\noindent\textbf{Main component analysis.} In~\cref{tab:main_component}, we provide an ablation study to assess the impact of the projection operation (Proj.) and group logit adjustment (GLA) loss on worst-group accuracy using CLIP ResNet-50. Row (a) represents the ERM-trained linear probing model; Row (b) probes group targets without prior correction; and Row (c) omits the projection operation during training. Row (d) shows results from our proposed method (PPA), while Row (e) replaces estimated group labels with ground-truth labels. Comparing Rows (b) and (d), we observe that adjusting the margin significantly improves worst-group accuracy. In the comparison of Rows (c) and (d), projecting out class proxies enhances group identification, resulting in better worst-group performance. Finally, Row (e) demonstrates the oracle performance using ground-truth labels, revealing that PPA approximates the maximum achievable accuracy, underscoring the importance of accurate pseudo-group labels.

\noindent\textbf{Study on the quality of estimated group labels.}
In the absence of training group labels, we rely on the quality of pseudo group labels inferred from the biased model  $f_\mathsf{b}(\mathbf{x})$ . To evaluate the quality of identified minority groups within the training set, we measure two metrics: \textit{precision}, defined as the fraction of examples within the identified minority groups that belong to the worst-performing group, and \textit{recall}, the fraction of examples from the worst-performing group captured by the identified minority groups. Here, the worst-performing group is defined as the group on which the ERM model achieves the lowest validation accuracy. In~\cref{fig:worst_rp}, we compare our method with two baselines: (1) ERM, which uses standard empirical risk minimization to train biased models, and (2) CA~\cite{zhang2022contrastive}, which leverages zero-shot models to identify minority groups. Our method, $\ours$, achieves significantly higher recall and precision than both ERM and CA. This improvement is be attributed to our biased model $f_\mathsf{b}$, which relies more heavily on spurious features to make predictions.

\noindent\textbf{Study on the hyper-parameter $\tau$.}
We examine the effect of  $\tau \in \mathbb{R}_+$  in~\cref{eq:gla}, as illustrated in~\cref{fig:tau} (more details are in~\cref{sec:more_rho}). Specifically, we vary  $\tau$  from 0.0 to 2.0 and report both the worst-group accuracy and average accuracy on the Waterbirds, CelebA, and MetaShift datasets using CLIP ResNet-50. 
A larger  $\tau$ promotes a greater margin for the minority groups, while $\tau=0$ reduces to the standard cross-entropy loss.
Our findings reveal that (1) average accuracy decreases monotonically as  $\tau$  increases, and (2) the optimal worst-group accuracy is generally achieved around  $\tau = 1.0$. For example, the optimal values of $\tau$ are $1.2, 1.0$ and $0.9$ for Waterbirds, CelebA and MetaShift, respectively. This aligns with our theoretical analysis, which suggests that the \textit{Bayes} optimal classifier for minimizing the balanced group error is attained when $\tau = 1$.  


\section{Conclusion}

In this paper, we introduce Project-Probe-Aggregate ($\ours$), a novel, parameter-efficient fine-tuning framework for enhancing the group robustness of image-text foundation models without relying on training group labels. Our approach improves the identification of minority groups by projecting image features onto the null space of class proxies. Additionally, we design a group logit adjustment loss to minimize balanced group error and mitigate spurious correlations. Through theoretical analysis and empirical evaluations across diverse benchmarks, fine-tuning paradigms and model architectures, we demonstrate that the proposed method effectively enhances robust group performance of foundation models.

\section*{Acknowledgment}
This research is supported by the RIE2025 Industry Alignment Fund – Industry Collaboration Projects (IAF-ICP) (Award I2301E0026), administered by A*STAR, as well as supported by Alibaba Group and NTU Singapore through
Alibaba-NTU Global e-Sustainability CorpLab (ANGEL).

{
    \small
    \bibliographystyle{ieeenat_fullname}
    \bibliography{main}
}

\newpage
\mbox{}
\newpage
\appendix

\tableofcontents 
\newpage

\section{Proofs}

\subsection{Proof of~\cref{prop:weight}}\label{sec:proof1}
\begin{restatedproposition}[\cref{prop:weight}]
     The weight of the spurious feature after  projection is
    \begin{equation}
        \gamma'=\gamma + \frac{\mathbf{r}_{\s}^\top  \mathbf{r}_{\mathbf{y}_\mathsf{o}}}{\mathbf{r}_{\s}^\top \mathbf{r}_{\s}}. 
    \end{equation}
\end{restatedproposition}

\begin{proof}
    
We have $d \in \mathbb{R}^d$ core features $\vc$ which determine the prediction target $y$ and the spurious features $s$. 
Suppose we observe $n$ features, stacked as 
$C=[\vc_1,\cdots,\vc_n] \in \mathbb{R}^{N\times d}$ and $\s=[s_1,\cdots,s_n] \in \mathbb{R}^N$. 
We have two regression scenarios:
\begin{itemize}
    \item \textbf{Full model:} Linear regression on the core features $\z$ and the spurious feature $s$:
    \begin{equation}\label{eq:sc1}
        \mathbf{y}=C\bm{\beta} + \gamma\s + \bm{\varepsilon},
    \end{equation}
    where $\bm{\beta}$ and ${\gamma}$ are the weights associated with the core features $\vc$ and the spurious feature $s$, respectively. $\bm{\varepsilon}$ is a noise term with an expected value of 0.
    \item \textbf{Projected model:} Applying the projection matrix $\Pi$ to $C$ to obtain $\tilde{C}=C\Pi$, followed by linear regression on $\tilde{\vc}$ and $s$:
    \begin{equation}\label{eq:sc2}
        \mathbf{y}=\tilde{C} \tilde{\bm{\beta}} + \gamma'\s + \bm{\varepsilon}'
    \end{equation}
    where $\tilde{\bm{\beta}}$ and ${\gamma}'$ are the weights for the projected core features $\tilde{\vc}$ and the spurious feature $s$. 
\end{itemize}
We define $M=I-\tilde{C} (\tilde{C}^\top\tilde{C})^{-1}\tilde{C}^\top$, $\mathbf{r}_\mathbf{y}=M\mathbf{y}$ and $\mathbf{r}_\mathbf{s}=M\mathbf{s}$. Applying $M$ to both sides of~\cref{eq:sc2}, we obtain:
\begin{align}
\mathbf{r}_\mathbf{y} &= M\tilde{C} \tilde{\bm{\beta}} + \gamma'\mathbf{r}_\mathbf{s}  + M \bm{\varepsilon}' \\
&= \gamma'\mathbf{r}_\mathbf{s} + M \bm{\varepsilon}' && \text{[$M\tilde{C}=0$]}
\end{align}
The weight of spurious feature is derived as:
\begin{equation}\label{eq:gamma1}
    {\gamma}'=(\mathbf{r}_\mathbf{s}^\top \mathbf{r}_\mathbf{s})^{-1}\mathbf{r}_\mathbf{s}^\top \mathbf{r}_\mathbf{y}.
\end{equation}
The core features  $C$ can be decomposed into the remaining part $\tilde{C}$ and the projected-out part $C_\mathsf{o}$:
\begin{equation}\label{eq:decompose}
    C=C\Pi+C(1-\Pi)=\tilde{C} + C_\mathsf{o}
\end{equation}
Combine~\cref{eq:decompose} and~\cref{eq:sc1}, we have:
\begin{equation}
        \mathbf{y}=\tilde{C}\bm{\beta}+ C_\mathsf{o}\bm{\beta} + {\gamma}\s + \bm{\varepsilon}
\end{equation}
Denote $\mathbf{y}_\mathsf{o}=C_\mathsf{o}\bm{\beta}$ is contribution of the projected-out core features and $\mathbf{r}_{\mathbf{y}_\mathsf{o}}=M\mathbf{y}_\mathsf{o}$, we can express $\mathbf{r}_\mathbf{y}$ as:
\begin{align}
\mathbf{r}_\mathbf{y}&=M\mathbf{y} \\
&=M(\tilde{C}\bm{\beta}+ C_\mathsf{o}\bm{\beta} + \gamma\s + \bm{\varepsilon}) \\
&=\mathbf{r}_{\mathbf{y}_\mathsf{o}} + \gamma\mathbf{r}_\mathbf{s} + M\bm{\varepsilon} && \text{[$M\tilde{C}=0$]}
\end{align}
Plugging into~\cref{eq:gamma1} and omitting the noise term (since $\mathbb{E}[\varepsilon\cdot s]=0$), we have:
\begin{align}
    \gamma'&=(\mathbf{r}_\mathbf{s}^\top \mathbf{r}_\mathbf{s})^{-1} \mathbf{r}_\mathbf{s}^\top(\mathbf{r}_\mathbf{s}\gamma + \mathbf{r}_{\mathbf{y}_\mathsf{o}}) \\
    &=\gamma + (\mathbf{r}_\mathbf{s}^\top \mathbf{r}_\mathbf{s})^{-1}\mathbf{r}_\mathbf{s}^\top \mathbf{r}_{\mathbf{y}_\mathsf{o}}
\end{align}

\end{proof}

\subsection{Proof of~\cref{propo:gla}}\label{sec:proof2}

\begin{lemma}\label{lemma:ber}
    Let $\hat{y}=\argmax_{y'\in \mathcal{Y}}f(\x)_{y'}$ denote the prediction of $f$. The balanced group error ($\mathsf{BGE}$) defined in~\cref{eq:ber} can be expressed as:
    \begin{equation}\label{eq:lemma}
        \mathsf{BGE}(f)=\frac{1}{|\mathcal{G}|}\mathbb{E}_{\x}[\sum_{g\in \mathcal{G}}
        \frac{\mathbb{P}(g|\x)}{\mathbb{P}(g)}\cdot \mathbb{P}(y\neq \hat{y}|\x)].
    \end{equation}
\end{lemma}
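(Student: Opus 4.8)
\textbf{Proof plan for Lemma~\ref{lemma:ber}.}

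The plan is to start from the definition of $\mathsf{BGE}$ in~\cref{eq:ber} and rewrite each per-group term $\mathbb{E}_{\x|g}[y\neq \hat y]$ as an expectation over the marginal $\x$-distribution via a change of measure. First I would write $\mathbb{E}_{\x|g}[y\neq\hat y] = \sum_{\x}\mathbb{P}(\x|g)\,\mathbb{P}(y\neq\hat y\mid \x,g)$ (or the integral analogue). The key observation is that $\hat y=\argmax_{y'}f(\x)_{y'}$ depends only on $\x$, and we should argue that the event $\{y\neq\hat y\}$ conditioned on $(\x,g)$ has the same probability as conditioned on $\x$ alone in the relevant sense — more precisely, since $g=(y,a)$ already encodes $y$, the quantity $\mathbb{P}(y\neq\hat y\mid\x,g)$ is $\mathds{1}[y_g\neq\hat y(\x)]$ where $y_g$ is the class component of $g$; summing the group contributions then reassembles $\mathbb{P}(y\neq\hat y\mid\x)$. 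The cleaner route, which I expect the authors take, is to apply Bayes' rule directly: $\mathbb{P}(\x|g)=\mathbb{P}(g|\x)\mathbb{P}(\x)/\mathbb{P}(g)$, so that
\begin{equation}
\mathbb{E}_{\x|g}[y\neq\hat y] = \sum_{\x}\frac{\mathbb{P}(g|\x)\mathbb{P}(\x)}{\mathbb{P}(g)}\,\mathbb{P}(y\neq\hat y\mid\x,g).
\end{equation}

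Next I would substitute this into~\cref{eq:ber}, pull the sum over $g$ inside the expectation over $\x$, and collect terms:
\begin{equation}
\mathsf{BGE}(f)=\frac{1}{|\mathcal{G}|}\mathbb{E}_{\x}\Big[\sum_{g\in\mathcal{G}}\frac{\mathbb{P}(g|\x)}{\mathbb{P}(g)}\,\mathbb{P}(y\neq\hat y\mid\x,g)\Big].
\end{equation}
The remaining step is to replace $\mathbb{P}(y\neq\hat y\mid\x,g)$ by $\mathbb{P}(y\neq\hat y\mid\x)$. This is the one point that needs a short justification rather than pure algebra: one argues that, for the purpose of the sum, conditioning on $g$ beyond $\x$ does not change the error event's probability in the aggregate — either because the paper implicitly treats $y\neq\hat y$ as a function of $\x$ (e.g. via a deterministic labeling or by absorbing the $g$-dependence into the weighting $\mathbb{P}(g|\x)/\mathbb{P}(g)$), or because $\sum_g \frac{\mathbb{P}(g|\x)}{\mathbb{P}(g)}\mathbb{P}(y\neq\hat y\mid\x,g)$ is being taken as the \emph{definition} of the reweighted error. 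I would state explicitly whichever convention makes the identity exact, note that $\hat y$ is $\x$-measurable, and conclude.

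The main obstacle is precisely this last substitution: making rigorous the step from $\mathbb{P}(y\neq\hat y\mid\x,g)$ to $\mathbb{P}(y\neq\hat y\mid\x)$, since in full generality these differ (the label $y$ is correlated with $g$). I expect the resolution is a modeling assumption — that the classifier's prediction and hence the error indicator is a deterministic function of $\x$, so that $\mathbb{P}(y\neq\hat y\mid\x,g)=\mathds{1}[\,\cdot\,]$ and the weighted sum over $g$ telescopes correctly — or simply that the right-hand side of~\cref{eq:lemma} is understood with $\mathbb{P}(y\neq\hat y|\x)$ denoting the appropriate conditional that the derivation produces. Everything else (Bayes' rule, swapping finite sums with expectation, factoring constants) is routine and I would not belabor it.
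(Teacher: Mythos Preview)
Your proposal is correct and follows essentially the same route as the paper: apply Bayes' rule $\mathbb{P}(\x|g)=\mathbb{P}(g|\x)\mathbb{P}(\x)/\mathbb{P}(g)$, swap the sum over $g$ with the expectation over $\x$, and then pass from the indicator to $\mathbb{P}(y\neq\hat y\mid\x)$. The step you flag as the obstacle is precisely where the paper's own argument is loosest---it resolves it by noting $\hat y$ is $\x$-measurable and invoking the law of total expectation to rewrite $\mathbb{E}_\x[\mathds{1}[y\neq\hat y]]$ as $\mathbb{E}_\x[\mathbb{P}(y\neq\hat y\mid\x)]$, which is one of the options you list.
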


\begin{proof}
    \begin{align}
        \mathsf{BGE}(f)&=\frac{1}{|\mathcal{G}|}\sum_{g\in \mathcal{G}}\mathbb{E}_{\x|g} [y\neq \argmax_{y'\in \mathcal{Y}}f(\x)_{y'}] \\
        &=\frac{1}{|\mathcal{G}|}\sum_{g\in \mathcal{G}}\int_{\x}\mathds{1}[y\neq \hat{y}]\mathbb{P}(\x|g) \text{d}\x \\
        &=\frac{1}{|\mathcal{G}|}\sum_{g\in \mathcal{G}}\int_{\x}\mathds{1}[y\neq \hat{y}] \frac{\mathbb{P}(g|\x)}{\mathbb{P}(g)}\mathbb{P}(\x) \text{d}\x \\ 
        &=\frac{1}{|\mathcal{G}|}\sum_{g\in \mathcal{G}} \mathbb{E}_\x[\frac{\mathbb{P}(g|\x)}{\mathbb{P}(g)}\cdot \mathds{1}[y\neq \hat{y}]] \\
        &=\frac{1}{|\mathcal{G}|}\mathbb{E}_\x[\sum_{g\in \mathcal{G}} \frac{\mathbb{P}(g|\x)}{\mathbb{P}(g)}\cdot \mathds{1}[y\neq \hat{y}]] \label{eq:ber_mid}
    \end{align} 
    The expected value $\mathbb{E}_\x[\mathds{1}[y\neq \hat{y}]]$ can be expressed as the joint expectation over $\x$ and $y$.  By applying the law of total expectation, we rewrite~\cref{eq:exp_value} by conditioning on $\x$ in~\cref{eq:condition_rewrite}.
    \begin{align}
    \mathbb{E}_\x[\mathds{1}[y\neq \hat{y}]]&=\mathbb{E}_{\x,y}[\mathds{1}[y\neq \hat{y}]] \label{eq:exp_value} \\
        &=\mathbb{E}_\x\mathbb{E}_{y|\x}[\mathds{1}[y=\hat{y}|\x]] \label{eq:condition_rewrite}
    \end{align}
    Given $\x$, $\hat{y}$ is deterministic. Thus the inner expectation  simplifies to the probability of  $y=\hat{y}$ conditioned on $\x$:
  \begin{equation}
      \mathbb{E}_{y|\x}[\mathds{1}[y=\hat{y}|\x]=\mathbb{P}(y=\hat{y}|\x)
  \end{equation}
    Substituting back into~\cref{eq:condition_rewrite}, we have:
    \begin{equation}\label{eq:error_final}
         \mathbb{E}_\x[\mathds{1}[y\neq \hat{y}]]=\mathbb{E}_\x[\mathbb{P}(y=\hat{y}|\x))]
    \end{equation}
    Combining~\cref{eq:error_final} with~\cref{eq:ber_mid}, we arrive at~\cref{eq:lemma}.
\end{proof}
\begin{restatedproposition}[\cref{propo:gla}
]
     Let $\mathcal{G}(y)$ denote the set of groups with class label $y$, \ie, $\mathcal{G}(y):=\{g=(y',a)\in \mathcal{G}|y'=y\}$. Let $\bm{\beta}$ denote the group priors, \ie, $\bm{\beta}_g=\mathbb{P}(g)$. The prediction :
    \begin{equation}\label{eq:bayes}
        \arg\max_{y\in \mathcal{Y}} f^*(\x)_y=\arg\max_{y\in \mathcal{Y}} \sum_{g\in \mathcal{G}(y)}(h(\x)-\ln \bm{\beta})_g
    \end{equation}
    is Bayes optimal for the problem in~\cref{eq:ber}.
\end{restatedproposition}

\begin{proof}
    Using \cref{lemma:ber}, to minimize the balanced group error, it is equivalent to minimize the term inside the expectation:
    \begin{align}
        &\sum_{g\in \mathcal{G}}\frac{\mathbb{P}(g|\x)}{\mathbb{P}(g)}\cdot \mathbb{P}(y\neq \hat{y}|\x) \\
        &=\sum_{y\in \mathcal{Y}}[\sum_{g \in \mathcal{G}(y)}\frac{\mathbb{P}(g|\x)}{\mathbb{P}(g)}\cdot (1-\mathbb{P}(y=\hat{y}|\x))]
    \end{align}
    It is equivalent to maximize:
    \begin{equation}
        \sum_{y\in \mathcal{Y}}[\sum_{g \in \mathcal{G}(y)}\frac{\mathbb{P}(g|\x)}{\mathbb{P}(g)}]\cdot \mathbb{P}(y=\hat{y}|\x).
    \end{equation}
    Denote $a_y=\sum_{g \in \mathcal{G}(y)}\frac{\mathbb{P}(g|\x)}{\mathbb{P}(g)}$ and $b_y=\mathbb{P}(y=\hat{y}|\x)$. Since $\{a_y\}$ are fixed and $b_y$ is a probability simplex, \ie, $b_y>0$ and $\sum_y b_y=1$. We are equivalent to solving the following constrained optimization problem:
    \begin{equation}
        \max_{b_y} \sum_y a_y b_y,\ \text{s.t.}\ b_y>0, \sum_y b_y=1
    \end{equation}
    It is straightforward to show that the optimal value is $\max_i a_i$, achieved when $i = \argmax_i a_i$, with $b_i = 1$ and $b_j = 0$ for $j \neq i$. Substituting back the definitions of $a$ and $b$.
    The solution is:
    \begin{equation}
    \mathbb{P}(y=\hat{y}|\x)=\begin{cases} 
   1, & \text{if } y=\arg\max_{y'}\sum_{g \in \mathcal{G}(y')}\frac{\mathbb{P}(g|\x)}{\mathbb{P}(g)} \\
    0, & \text{otherwise} 
    \end{cases}
\end{equation}
It is equivalent to show that:
\begin{equation}
    \mathbb{P}(\arg\max_{y'}\sum_{g \in \mathcal{G}(y')}\frac{\mathbb{P}(g|\x)}{\mathbb{P}(g)}=\hat{y}|\x)=1
\end{equation}
Therefore, the \textit{Bayes} optimal solution is:
\begin{equation}\label{eq:bayesv1}
    \arg\max_{y\in \mathcal{Y}} f^*(\x)_y=\arg\max_{y'}\sum_{g \in \mathcal{G}(y')}\frac{\mathbb{P}(g|\x)}{\mathbb{P}(g)}
\end{equation}
With the definition $\bm{\beta}_g=\mathbb{P}(g)$ and $\mathbb{P}(g|\x)\propto \exp(h(\x)_g)$, we have:
\begin{equation}\label{eq:bayesv2}
    \frac{\mathbb{P}(g|\x)}{\mathbb{P}(g)}\propto \frac{\exp(h(\x)_g)}{\exp(\ln\bm{\beta}_g)} \propto (h(\x) - \ln\bm{\beta})_g
\end{equation}
Since proportional scaling does not change argmax results, combining~\cref{eq:bayesv2} with~\cref{eq:bayesv1}, we obtain~\cref{eq:bayes}.
\end{proof}
\section{Additional Experimental Details}\label{sec:addition-details}
\begin{table*}[t]
\begin{center}
\caption{Class prompt and group prompt templates.}
\label{table:prompt_templates}
\begin{tabular}{lll}
\toprule
Dataset    & Class Prompt & Group Prompt \\ \midrule
Waterbirds & \textit{a type of bird, a photo of a \{class\}.}  &          \textit{a type of bird, a photo of a \{class\} on \{group\}.}      \\
CelebA     & \textit{a photo of a celebrity with \{class\}.}             &   \textit{a photo of a celebrity, a \{group\} with \{class\}}.           \\
MetaShift  &    \textit{a photo of a \{class\}.}             &           \textit{a photo of a \{group\} \{class\}.}     \\ 
BAR &a photo of a person doing \{class\}. & N/A \\
Living-17  &   \textit{a photo of a \{class\}.}           &           N/A   \\
\bottomrule
\end{tabular}
\end{center}
\end{table*}

\begin{table*}[t]
\begin{center}
\caption{Class and group names.}
\label{table:class_group_names}
\begin{tabular}{lll}
\toprule
Dataset    & Class Names & Group Names \\ \midrule
Waterbirds & landbird, waterbird &    land, water       \\
CelebA     &  non-blond hair, blond hair   &  man, woman            \\
MetaShift  &  dog, cat         & outdoor, indoor     \\ 
BAR & climbing, diving, fishing, pole vaulting, racing, throwing & N/A \\
\multirow{2}{*}{Living-17}  & \multirow{2}{*}{\begin{tabular}[c]{@{}l@{}}salamander, turtle,  lizard, snake, spider, grouse, parrot,
crab, \\ dog, wolf, fox, cat, bear, beetle, butterfly, ape,  monkey \end{tabular}}  &          \multirow{2}{*}{N/A }   \\
&& \\
\bottomrule
\end{tabular}
\end{center}
\end{table*}
\begin{table}[t]
\caption{$\ours$ consistently improves group robustness for across different efficient fine-tuning paradigms using  CLIP ViT-L/14.}\label{tab:other-ft-vit14}
\begin{adjustbox}{width=1.\linewidth}
\begin{tabular}{lcccccc}
\toprule
\multirow{2}{*}{Method} & \multicolumn{2}{c}{Waterbirds} & \multicolumn{2}{c}{CelebA} &  \multicolumn{2}{c}{MetaShift} \\
    \cmidrule(r){2-3} \cmidrule(r){4-5} \cmidrule(r){6-7} 
                        & WGA            & Avg           & WGA          & Avg         & WGA                   & Avg           \\ \midrule
Linear Probe + ERM            
&  65.9
&  97.6
&  28.3
&  94.7 
&  84.6
&  96.7            \\
\rowcolor{tabhighlight}
Linear Probe + PPA             
&  87.2             
&  94.6
&  90.4 
&  91.0
&  94.8
&  96.8       \\ \midrule   
CoOp + ERM     
& 74.0
& 97.3
& 26.7
& 94.6
& 91.9
& 96.9            \\
\rowcolor{tabhighlight}
CoOp + PPA       
& 87.4
& 94.1
& 85.6
& 88.3
& 93.7
& 96.4
\\ \midrule
Adapter + ERM     
& 79.3
& 97.8
& 54.4
& 94.5
& 90.6
& 95.5
\\
\rowcolor{tabhighlight}
Adapter + PPA               
& 83.3            
& 95.8          
& 88.3          
& 91.7          
& 92.3             
& 96.4  
\\ \bottomrule
\end{tabular}
\end{adjustbox}
\end{table}

\subsection{Additional Implementation Details}
For all methods evaluated in our experiments, we use the SGD optimizer with a weight decay of $5\times 10 ^{-5}$ and a momentum of $0.9$. The initial learning rate is set to 0.0002 and decreases to 0 using cosine annealing. The models are trained for 100 epochs, with a warm-up learning rate of $10^{-5}$ applied during the first epoch to mitigate explosive gradients in the early training iterations. The batch size is set to 128 for most datasets, except for CelebA, where it is increased to 512 to accelerate training due to the dataset’s relatively larger size. All classification heads including linear probing, prompt tuning and adapters are initialized with the zero-shot prompting. 
For all datasets except BAR, we evaluate the model on the validation set at the end of each epoch and select the one with the highest worst-group accuracy for final testing. For the BAR dataset, which lacks a validation set, we use the checkpoint from the last epoch for testing. The hyperparameter $\tau$ is searched within the range $[0.8, 0.9, 1.0, 1.1, 1.2]$. All experiments are conducted in a single NVIDIA A6000 GPU.

\subsection{Prompt Templates}
In~\cref{table:prompt_templates}, we present the prompt templates for zero-shot prompting and group-informed prompting for each dataset. Zero-shot prompting with class names is also used to construct the class proxy matrix $Z$ in step 1 of our $\ours$. \cref{table:class_group_names} lists the class names and group names for all datasets.

\subsection{Versatility of Fine-Tuning Paradigms}\label{sec:versa-fine-tuning}

In~\cref{tab:other-ft-vit14}, we apply our $\ours$ to other parameter-efficient fine-tuning paradigms using CLIP ViT-L/14 models. We observe consistent gains in worst group accuracies.

 We further extend our method to train 2-Layer MLP after CLIP, with results in~\cref{tab:other-ft}, showing that the 2-Layer MLP offers no significant gains over the linear layer.

 \begin{table}[t]
\centering
\caption{Results of other fine-Tuning paradigms.}\label{tab:other-ft}
\scalebox{0.8}{
\begin{tabular}{lcccccc}
\toprule
\multirow{2}{*}{} & \multicolumn{2}{c}{Waterbirds} & \multicolumn{2}{c}{CelebA} &  \multicolumn{2}{c}{MetaShift} \\
    \cmidrule(r){2-3} \cmidrule(r){4-5} \cmidrule(r){6-7} 
                        & WGA            & Avg           & WGA          & Avg         & WGA                   & Avg           \\ \midrule
Linear Layer                           
& 84.3
& 88.3
& 91.1
& 92.1
& 90.8
& 94.7     \\ 
2-Layer MLP                           
& 83.4
& 88.1
& 90.4
& 92.5
& 90.1
& 95.9   \\ 
Full Fine-Tuning                           
& 83.7
& 89.8
& 91.8
& 93.2
& 89.5
& 95.2     \\
\bottomrule
\end{tabular}
}
\end{table}

\subsection{Noise Sensitivity of Pseudo-Labels} 
To assess the noise sensitivity, we randomly select $p \%$  of the training samples and assign random values to subgroup labels within each class to introduce pseudo-label errors. 
The worst-group accuracies for varying $p$ are shown in the figure below. Our results indicate that the proposed method maintains high WGA when label noise is below $10\%$, demonstrating its robustness under mild noise conditions.

\begin{figure}[t]
  \centering
  \includegraphics[width=0.8\linewidth]{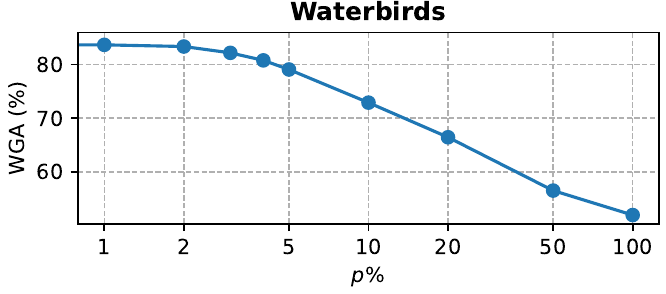}
  \caption{Results on noise sensitivity.}
\end{figure}

\subsection{Comparison with Other Methods}
We compare our model with~\cite{kim2024discovering} using CLIP ResNet-50 in~\cref{tab:other_methods}. 

\begin{table}[h]
\centering
\caption{Comparison with other methods.}
\label{tab:other_methods}
\scalebox{0.8}{
\begin{tabular}{lcccc}
\toprule
\multirow{2}{*}{Method} & \multicolumn{2}{c}{Waterbirds} & \multicolumn{2}{c}{CelebA} \\
    \cmidrule(r){2-3} \cmidrule(r){4-5} 
                        & WGA            & Avg           & WGA          & Avg                  \\ \midrule
CLIP + B2T~[2]                          
& 61.7
& 76.9
& 80.0
& 87.2    \\  
CLIP + PPA~(ours)
& 84.3
& 88.3
& 91.1
& 92.1 \\
\bottomrule
\end{tabular}
}
\end{table}

\subsection{Results without Tuning $\tau$}
\label{sec:more_rho}The results with $\tau = 1$ are reported in~\cref{sec:details_rho}. As expected, $\tau=1$ still achieves SOTA.
\begin{table}[h]
\centering
\caption{Results without Tuning $\tau$.}
\label{sec:details_rho}
\scalebox{0.8}{
\begin{tabular}{lcccccc}
\toprule
\multirow{2}{*}{} & \multicolumn{2}{c}{Waterbirds} & \multicolumn{2}{c}{CelebA} &  \multicolumn{2}{c}{MetaShift} \\
    \cmidrule(r){2-3} \cmidrule(r){4-5} \cmidrule(r){6-7} 
                        & WGA            & Avg           & WGA          & Avg         & WGA                   & Avg           \\ \midrule
Optimal $\tau$            
&  84.3
& 88.3
& 91.1
& 92.1
& 90.8
& 94.7         \\
$\tau=1$                           
& 82.7
& 91.3
& 91.1
& 92.1
& 89.8
& 94.1     \\  \bottomrule
\end{tabular}
}
\end{table}
\subsection{Additional Dataset Details}
In this section, we show the statistics of all datasets used in our experiments in~\cref{tab:waterbirds_statistics,tab:celebA_statistics,tab:metashift_statistics,tab:living17_statistics,tab:bar_statistics} and illustrate some image samples in~\cref{fig:waterbirds_exp,fig:celeba_exp,fig:metashift_images,fig:bar}.

\begin{table*}[t]
    \centering
    \begin{minipage}[t]{0.3\textwidth}
            \caption{Statistics of Waterbirds.}\label{tab:waterbirds_statistics}
        \centering
        \begin{adjustbox}{width=1.\linewidth}
        \begin{tabular}{lllll}
        \toprule
          & \multicolumn{2}{c}{Train} & \multicolumn{2}{c}{Test} \\ \cline{2-5} 
          & Water        & Land       & Water       & Land       \\ \midrule
Waterbird & 1057         & 56         & 642         & 642        \\
Landbird  & 184          & 3498       & 2255        & 2255      \\ \bottomrule    
\end{tabular}
\end{adjustbox}
    \end{minipage}
    \hfill
    \begin{minipage}[t]{0.32\textwidth}
            \caption{Statistics of CelebA.}\label{tab:celebA_statistics}
        \centering
        \begin{adjustbox}{width=1.\linewidth}
        \begin{tabular}{lllll}
        \toprule
          & \multicolumn{2}{c}{Train} & \multicolumn{2}{c}{Test} \\ \cline{2-5} 
          & Female        & Male       & Female       & Male       \\ \midrule
Blond      & 22880        & 1387       & 2480        & 180         \\
Non-blond  & 71629        & 66874      & 9767        & 9767     \\ \bottomrule
\end{tabular}
\end{adjustbox}
    \end{minipage}
    \hfill
    \begin{minipage}[t]{0.3\textwidth}
            \caption{Statistics of MetaShift.}\label{tab:metashift_statistics}
        \centering
        \begin{adjustbox}{width=1.\linewidth}
        \begin{tabular}{lllll}
        \toprule
          & \multicolumn{2}{c}{Train} & \multicolumn{2}{c}{Test} \\ \cline{2-5} 
          & Indoor  & Outdoor   & Indoor     & Outdoor     \\ \midrule
Cat &  630        &  153       & 345         &  65      \\
Dog  &  402       & 635       &  191        & 273     \\ \bottomrule   
\end{tabular}
\end{adjustbox}
    \end{minipage}
\end{table*}

\begin{table*}[t]
    \centering
    \begin{minipage}[t]{0.4\textwidth}
            \caption{Statistics of Living-17.}\label{tab:living17_statistics}
        \centering
        \begin{adjustbox}{width=1.\linewidth}
        \begin{tabular}{lllll} \toprule
           & \multicolumn{2}{c}{Train} & \multicolumn{2}{c}{Test} \\ \cline{2-5} 
           & Majority    & Minority    & Majority    & Minority   \\ \midrule
Group size & 2340        & 117         & 100         & 100   \\ \bottomrule
\end{tabular}
\end{adjustbox}
    \end{minipage}
    \hfill
    \begin{minipage}[t]{0.35\textwidth}
            \caption{Statistics of BAR.}\label{tab:bar_statistics}
        \centering
        \begin{adjustbox}{width=1.\linewidth}
        \begin{tabular}{lccc} \toprule
         & \multicolumn{2}{c}{Train} & Test     \\ \cline{2-4} 
         & Majority    & Minority    & Minority \\ \midrule
Climbing & 326         & 5           & 100      \\
Diving   & 520         & 8           & 151     \\
Fishing  & 163         & 4           & 38      \\
Racing   & 336         & 9           & 123        \\
Throwing & 137         & 3           &  82       \\
Vaulting & 279         & 7           &  124        \\ \bottomrule
\end{tabular}
\end{adjustbox}
    \end{minipage}
\end{table*}
\begin{figure*}[h]
    \centering
    \begin{minipage}[t]{0.3\textwidth}
        \centering
        \includegraphics[width=\textwidth]{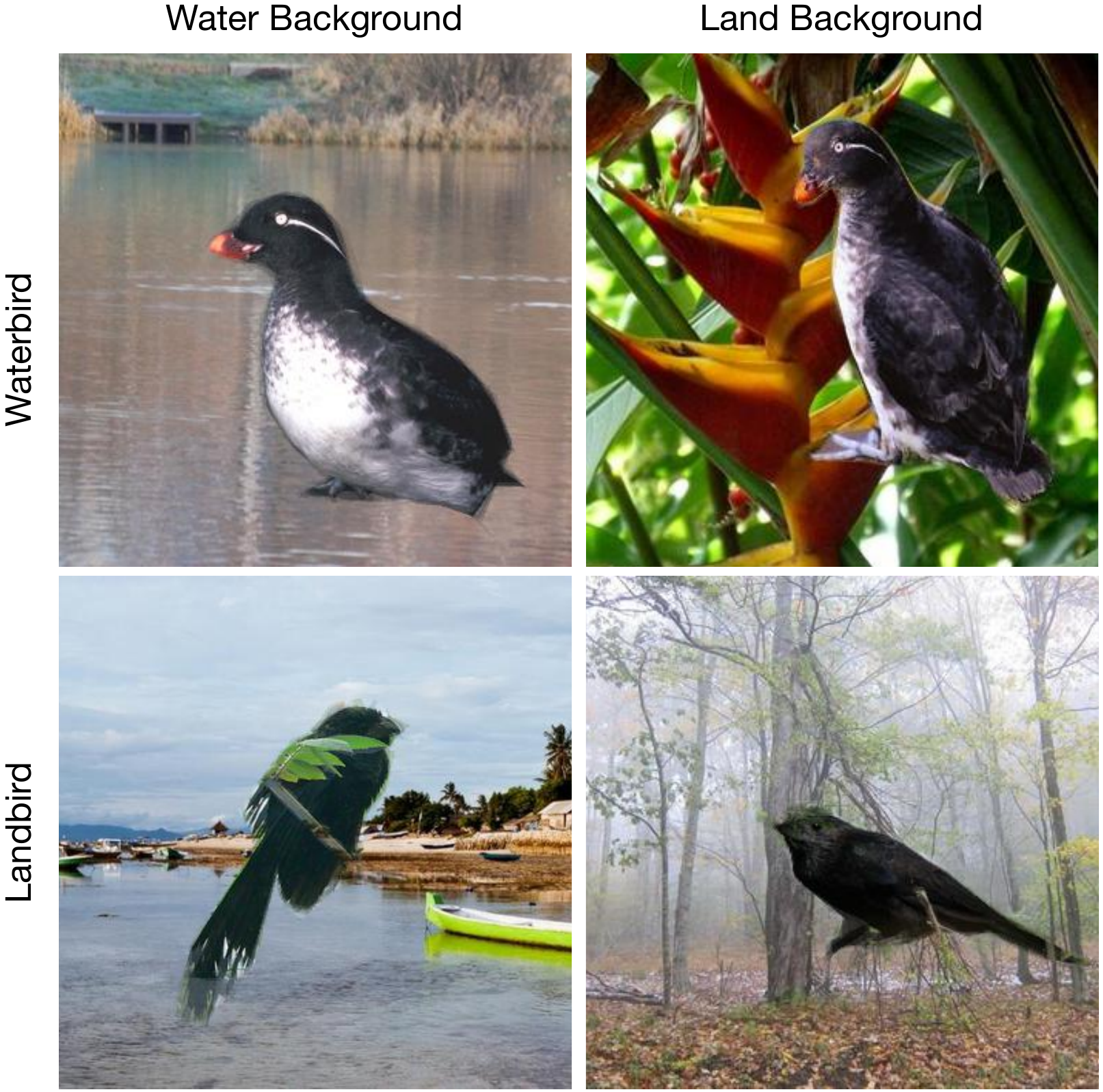}
        \caption{Image samples of Waterbirds.}    \label{fig:waterbirds_exp}
    \end{minipage}
    \hfill
    \begin{minipage}[t]{0.3\textwidth}
        \centering
        \includegraphics[width=\textwidth]{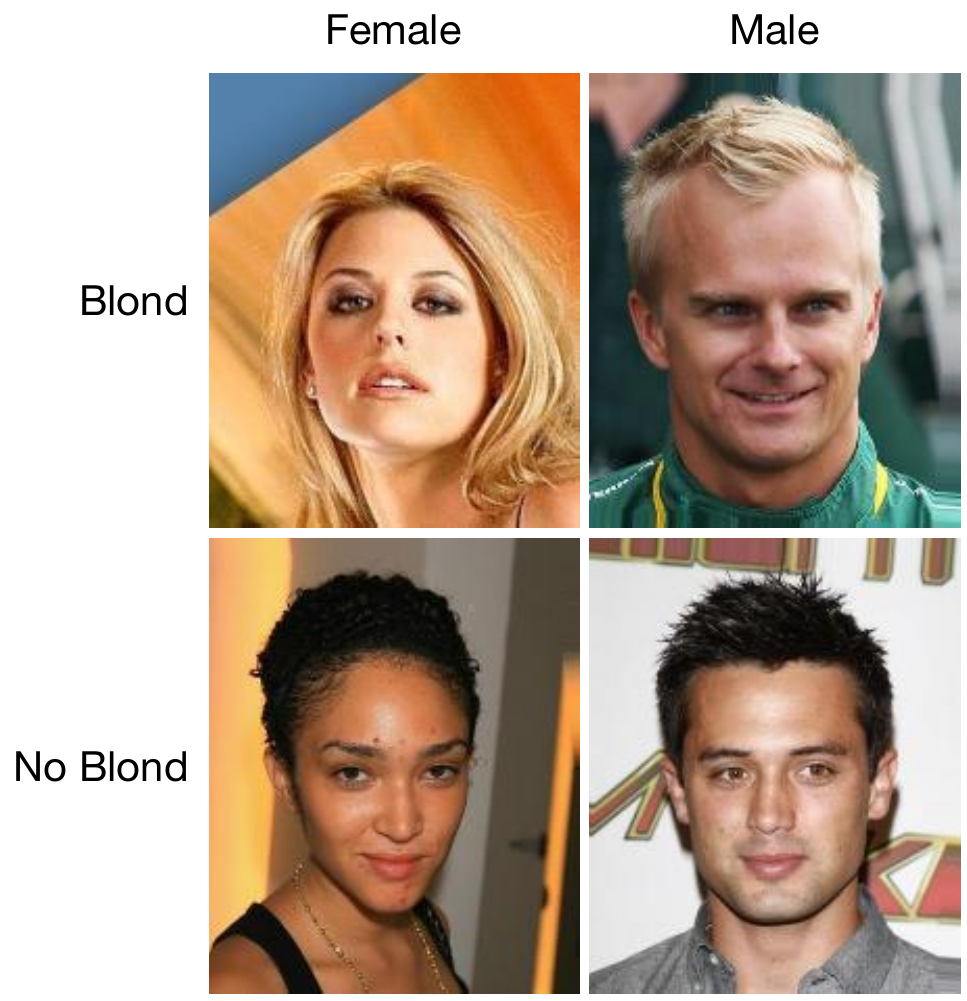}
        \caption{Image samples of CelebA.}
        \label{fig:celeba_exp}
    \end{minipage}
    \hfill
    \begin{minipage}[t]{0.3\textwidth}
        \centering
        \includegraphics[width=\textwidth]{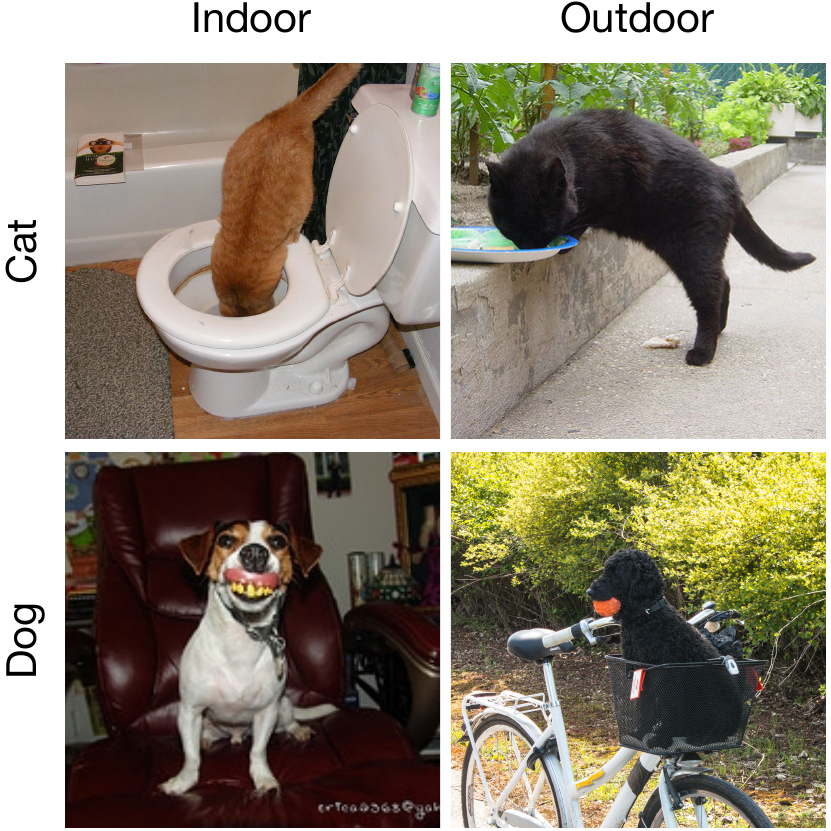}
        \caption{Image samples of MetaShift.}
        \label{fig:metashift_images}
    \end{minipage}
\end{figure*}
\begin{figure*}[t]
    \centering
    \includegraphics[width=1.0\textwidth]{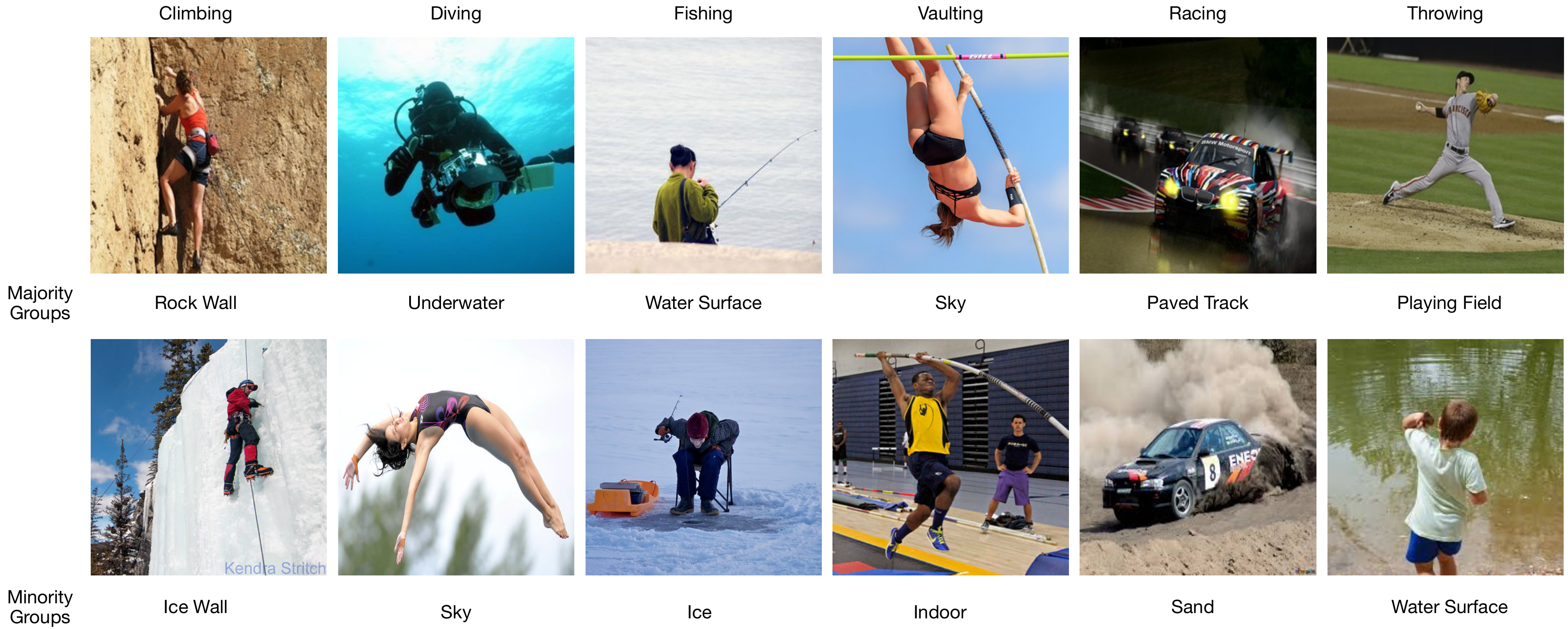}
    \caption{Image samples of BAR dataset.}
    \label{fig:bar}
\end{figure*}

\section{Discussion of Limitation}
Our approach assumes that the CLIP text encoder can offer class proxies for downstream tasks. However, if the pre-trained knowledge diverges significantly from the downstream tasks, the effectiveness of our method may be limited. For instance, if the images are X-ray scans and the target is to predict a specific illness, the text encoder of the pre-trained model may lack relevant medical knowledge.
\end{document}